
\documentclass{article}

\usepackage{microtype}
\usepackage{graphicx}
\usepackage{subfigure}
\usepackage{booktabs} 

\usepackage{hyperref}



\usepackage[accepted]{icml2024}

\usepackage{amsmath}
\usepackage{amssymb}
\usepackage{mathtools}
\usepackage{amsthm}

\usepackage[capitalize,noabbrev]{cleveref}

\theoremstyle{plain}
\newtheorem{theorem}{Theorem}[section]

\newtheorem{lemma}[theorem]{Lemma}

\theoremstyle{definition}
\newtheorem{definition}[theorem]{Definition}

\theoremstyle{remark}
\newtheorem{remark}[theorem]{Remark}

\usepackage[textsize=tiny]{todonotes}

\usepackage{algorithm}
\usepackage{algorithmic}
\usepackage[subtle]{savetrees}

\newtheorem{claim}[theorem]{Claim}

\input{defs.tex}

\newcommand{\E}{\ensuremath{{\mathbb{E}}}}

\icmltitlerunning{Agnostic Mixed Linear Regression with EM and AM}

\begin{document}

\twocolumn[
\icmltitle{Agnostic Learning of Mixed Linear Regressions with \\ EM and AM Algorithms}



\icmlsetsymbol{equal}{*}

\begin{icmlauthorlist}
\icmlauthor{Avishek Ghosh}{yyy}
\icmlauthor{Arya Mazumdar}{comp}
\end{icmlauthorlist}

\icmlaffiliation{yyy}{Systems and Control Engg. and Centre for Machine Intelligence for Data Sciences, Indian Institute of Technology, Bombay, India.}
\icmlaffiliation{comp}{Hal\i c\i o\u{g}lu Data Science Institute, University of California, San Diego}

\icmlcorrespondingauthor{Avishek Ghosh}{avishek.ghosh38@gmail.com}

\icmlkeywords{Machine Learning, ICML}

\vskip 0.3in
]



\printAffiliationsAndNotice{}  

\begin{abstract}
Mixed linear regression is a well-studied problem in parametric statistics and machine learning. Given a set of samples, tuples of covariates and labels, the task of mixed linear regression is to find a small list of linear relationships that best fit the samples. Usually it is assumed that the label is generated stochastically by randomly selecting one of two or more linear functions, applying this chosen function to the covariates, and potentially introducing noise to the result. In that situation, the objective is to estimate the ground-truth linear functions up to some parameter error. The popular expectation maximization (EM)  and alternating minimization (AM) algorithms have been previously analyzed for this.
  
  In this paper, we consider the more general problem of agnostic learning of mixed linear regression from samples, without such generative models. In particular, we show that the AM and EM algorithms, under standard conditions of separability and good initialization, lead to agnostic learning in mixed linear regression by converging to the population loss minimizers, for suitably defined loss functions. In some sense, this shows the strength of AM and EM algorithms that converges to ``optimal solutions'' even in the absence of realizable generative models. 
\end{abstract}

\section{Introduction}
Suppose we obtain samples from a data distribution $\cD$ on $\reals^{d+1}$, i.e., $\{x_i,y_i\}\sim \cD,$ $ 
 x_i \in \reals^d, y_i \in \reals, i =1, \dots, n$.
We
consider the  problem of learning a list of $k$ $\reals^d \to \reals$ linear  functions $y = \theta_j^T x, \theta_j \in \reals^d, j =1, \dots, k$, that best fits  the samples. 

This problem is well-studies as the {\em mixed linear regression}, when there are ground-truth $\tilde{\theta}_j, j = 1, \dots, k,$ that generate the samples. For example, 
the setting where 
\begin{align}\label{eq:generative}
x_i \sim \cN(0,I_d), \theta \sim \mathrm{Unif}\{\tilde{\theta_1}, \dots, \tilde{\theta_k}\},  y_i | \theta \sim \cN(x^T\theta, \sigma^2),
\end{align}
for $i =1, \dots, n$ has been analyzed thoroughly. Bounds on sample complexity are provided in terms of $d, \sigma^2$ and error in estimating parameters $\tilde{\theta_j}, j =1, \dots, k$~(\cite{chaganty2013spectral,faria2010fitting,stadler2010l,li2018learning,kwon2018global,viele2002modeling,yi2014alternating,yi2016solving,balakrishnan2017statistical,klusowski2019estimating}).

In this paper, we consider an agnostic and general learning theoretic setup to study the mixed linear regression problem first studied in~\cite{pal2022learning}. In particular, we do not assume a generative model on the samples. Instead we focus on finding the optimal set of lines that minimize a certain loss. 

Suppose, we denote a loss function $\ell: \reals^{d \times k}\to \reals$ evaluated on a sample as $\ell(\theta_1, \theta_2, \dots, \theta_k; x,y)$. The population loss is $$\cL(\theta_1, \theta_2, \dots, \theta_k) \equiv \E_{(x,y)\sim \cD}\ell(\theta_1, \theta_2, \dots, \theta_k; x,y),$$ and the population loss minimizers
$$
(\theta^\ast_1, \dots, \theta^\ast_k) \equiv \arg \min 
 \mkern5mu \cL(\theta_1, \theta_2, \dots, \theta_k).
$$

Learning in this setting makes sense if we are allowed to predict a {\em list} (of size $k$) of labels for an input, as pointed out in \cite{pal2022learning}. We may set some  goodness criteria, such as an weighted average of prediction error over all elements in the list. In \cite{pal2022learning}, it was called a `good' prediction if at least one of the labels in the list is good, in particular, the following loss function was proposed, that we will call {\em min-loss}:
\begin{align}\label{eq:minloss}
\ell_{\min}(\theta_1, \theta_2, \dots, \theta_k; x,y) = \min_{j \in [k]} \left \lbrace (y - \inprod{x}{\theta_j})^2  \right \rbrace .
\end{align}
The intuition behind min-loss is simple. Each sample is assigned to a best-fit line, which define a partition of the samples. This is analogous to the popular $k$-means clustering objective.
In addition to the min-loss function, we will also consider the following {\em soft-min} loss function:
\begin{align}\label{eq:softmin}
\ell_{\rm softmin}(\theta_1, \theta_2, \dots, \theta_k; x,y) = \sum_{j=1}^k p_{\theta_1,..,\theta_k}(x,y;\theta_j) \left[ y - \langle x,\theta_j \rangle \right]^2 , 
\end{align}
\begin{align*}	\text{where} \quad p_{\theta_1,..,\theta_k}(x,y;\theta_j) = \frac{e^{-\beta(y - \langle x,\theta_j \rangle)^2}}{\sum_{l=1}^k e^{-\beta(y - \langle x,\theta_l \rangle)^2}}
\end{align*}
with $\beta \ge 0$ as the inverse temperature parameter. Note that, at $\beta \to \infty$, this loss function correspond to the min-loss defined above. On the other hand, at $\beta =0,$ this is simply an average of the squared errors, if a label is uniformly chosen from the list. Depending on how the prediction would occur, the loss function, and therefore the best-fit lines $\theta^\ast_1, \dots, \theta^\ast_k$ will change.


As is the usual case in machine learning, a learner has access to the distribution $\cD$ only through the samples $\{x_i,y_i\}, i =1, \dots, n$. Therefore instead of the population loss, one may attempt to minimize the empirical loss:
\begin{align*}
    L(\theta_1,\ldots,\theta_k) \equiv \frac{1}{n}\sum_{i=1}^n \ell(\theta_1, \theta_2, \dots, \theta_k; x_i,y_i).
\end{align*}
Usual learning theoretic generalization bounds on excess risk should hold provided the loss function satisfies some properties\footnote{Some discussions on generalization with soft-min loss can be found in Section~\ref{sec:gen_guarantees}.}. However, there are certain caveats in solving the empirical loss minimization problem. For example, even the presumably simple case of squared error (Eq.\eqref{eq:minloss}), the minimization problem is NP-hard, by reduction to the subset sum problem~\cite{yi2014alternating}.

An intuitive and generic iterative method that is widely-applicable for problems with latent variables (in our case, which line is best fit for a sample) is the {\em alternating minimization} (AM) algorithm. At a very high level, starting from some initial estimate of the parameters, the AM algorithm first tries to find a partition of samples according to the current estimate, and then finds the best fit lines within each part. Again under the generative model of \eqref{eq:generative}, AM can approach the original parameters assuming suitable initialization~\cite{yi2014alternating}.

Another popular method of solving mixed regression problems (or in general mixture models) is the well-known {\em expectation maximization} (EM) algorithm.  EM is an iterative algorithm that, starting from an initial estimate of parameters, iteratively update the estimates based on data, by taking an expectation-step and maximization-step repeatedly. For example, it was shown in \cite{balakrishnan2017statistical} that, under the assumption of the generative model that was defined in Eq.~\eqref{eq:generative}, one can give guarantees on recovering the ground-truth parameters  $\tilde{\theta_1}, \dots, \tilde{\theta_k}$ assuming a suitable initialization.

In this paper, we show that the AM and the EM algorithms are in fact more powerful in the sense that even in the absence of a generative model, they lead to agnostic learning of parameters. It turns out, under standard assumptions on data-samples and $\cD$, these iterative methods can output the minimizers of the population loss $\theta^\ast_1, \dots, \theta^\ast_k$ with appropriately defined loss functions. In particular, starting from reasonable initial points, the estimates of the AM algorithm approach $\theta^\ast_1, \dots, \theta^\ast_k$ under the min-loss (Eq.~\ref{eq:minloss}), and the estimates of the EM algorithm approach the minimizers of the population loss under the soft-min loss (Eq.~\ref{eq:softmin}).

Instead of the standard AM (or EM), a version that has been referred to as {\em gradient EM} (and {\em gradient AM}) is also popular and has been analyzed in \cite{balakrishnan2017statistical,zhu2017high, wang2020differentially,pal2022learning} to name a few. Here, in lieu of the maximization step involved in EM (minimization for AM), a gradient step with appropriately chosen step size is taken. This version is amenable to analysis and is strictly worse than the actual EM (or AM) in their generative setting. In this paper as well, we analyze the gradient EM algorithm, and the analogous  gradient AM algorithm.

Recently \cite{pal2022learning} proposed a gradient AM algorithm for the agnostic mixed linear regression problem. However, 
they require a strong assumption on initialization 
of $\{\theta_i\}_{i=1}^k$ within a radius of $\mathcal{O}(\frac1{\sqrt{d}})$ of the corresponding $\{\theta^\ast_i\}_{i=1}^k.$ As we can see, in high dimension, the initialization condition is prohibitive. The dimension dependence initialization in \cite{pal2022learning} comes from a discretization ($\epsilon$-net) argument, which was  crucially used to remove inter-iteration dependence of the gradient AM algorithm.

In this paper, we show that a dimension independent initialization is sufficient for gradient AM. In particular, we showed that the initialization needed for $\{\theta_i\}_{i=1}^k$ is $\Theta(1)$, which is a significant improvement over the past work  \cite{pal2022learning}. Instead of an $\epsilon$-net argument, we use fresh samples every round. Moreover, we thoroughly analyze the behavior of restricted covariates on a (problem defined) set, in the agnostic setup, which turns out to be non-trivial. In particular, we observe that the restricted covariates are sub Gaussian with a \emph{shifted mean} and variance, and we need to control the minimum singular value of the covariance matrix of such restricted covariates (which dictates the convergence rate). We leverage some properties of restricted distributions \cite{tallis-mgf}, and were able to analyze such covariates rigorously, obtain bounds and show convergence of AM.

In this paper we also propose and analyze the soft variant of gradient AM, namely gradient EM. As discussed above, the associated loss function is the \emph{soft-min} loss. We show that gradient EM also requires dimension independent $\mathcal{O}(1)$ initialization, and also converges in an exponential rate.

While the performance of both the gradient AM and gradient EM algorithms are similar, AM minimizes a min-loss whereas EM minimizes the optimal soft-min loss (maximum likelihood loss in the generative setup). As shown in the subsequent sections, AM requires a separation condition (appropriately defined in Theorem~\ref{thm:am}) whereas EM does not. On the other hand, EM requires the initialization parameter to satisfy certain condition, albeit mild (exact condition in Theorem~\ref{thm:em}). 

\subsection{Setup and Geometric Parameters}
Recall that the parameters $\theta^*_1, \ldots,\theta^*_k$ are the minimizers of the population loss function, and we consider both \emph{min-loss} ($\ell_{\min}(.)$) as well as \emph{soft-min} loss ($\ell_{\rm softmin}(.)$) as defined in the previous section. We define
\begin{align*}
    S^*_{j} =  
    \{(x \in \reals^d,y\in \reals): (y - \inprod{x}{\theta^*_j})^2  < (y - \inprod{x}{\theta^*_l})^2,
\end{align*}
$\text{ for all } l \in [k]\setminus j \}$ as the possible set of observations where $\theta^*_j$ is a better (linear) predictor (in $\ell_2$ norm) compared to $\theta^*_1,\ldots,\theta^*_k$. 
Furthermore, in order to avoid degeneracy, we assume, for any $j \in [k]$
$$
\Pr_\cD(x: (x,y) \in  S^\ast_j) \ge \pi_{\min},
$$
for some $\pi_{\min}>0.$ We are interested in the probability measure corresponding to the random vector $x$ only, and we integrate (average-out) with respect to $y$ to achieve this. We emphasize that, in the realizable setup, the distribution of $y$ is governed by that of $x$ (and possibly some noise  independent of $x$), and in that setting our definition of $S^*_j$ and $\pi_{\min}$ becomes analogous to that of \cite{yi2014alternating,yi2016solving}\footnote{In \cite{yi2014alternating,yi2016solving}, the authors denote $\{S^*_j\}_{j=1}^k$ as set of indices, but that can be thought of as an analogue to a subset of $\mathbb{R}^{d+1}$ as shown above.}.



Since we are interested in  recovering $\theta^*_j, j=1, \dots, k$, a few geometric quantities naturally arises in our setup. We define the \emph{misspecification} parameter $\lambda$ as a smallest non-negative number satisfying 
\begin{align*}
    |y_i - \langle x_i, \theta^*_j \rangle| \leq \lambda \quad \text{for all } (x_i,y_i) \in S^*_j \quad \text{and } j \in[k].
\end{align*}
Moreover, we also define the \emph{separation} parameter $\Delta$ as the largest non-negative number satisfying
\begin{align*}
    \min_{l \in [k]\setminus j} |y_i - \langle x_i, \theta^*_l \rangle| \geq \Delta \quad \text{for all } (x_i,y_i) \in S^*_j.
\end{align*}
Let us comment on these geometric quantities. Note that in the case of a realizable setup, the parameter $\lambda = 0$ in the noiseless case or proportional to the noise in the noisy case. In words, $\lambda$ captures the level of misspecification from the linear model.
On the other hand, the parameter $\Delta$ denotes the separation or margin in the problem. In classical mixture of linear regression framework, with realizable structure, similar assumptions are present in terms of the (generative) parameters. Moreover, with the realizable setup, our assumption can be shown to be \emph{exactly} same as the usual separation assumption.

\subsection{Summary of Contributions}
Let us now describe the main results of the paper. To simplify exposition, we state the results here informally and the rigorous statements may be found in Sections~\ref{sec:soft_loss} and \ref{sec:am}.

Our main contribution is analysis of the gradient AM and gradient EM algorithms. The gradient AM algorithm works in the following way. At iteration $t$, based on the current parameter estimates $\{\theta^{(t)}_j\}_{j=1}^k$, the gradient AM algorithm constructs estimates of $\{S^*_j\}_{j=1}^k$, namely $\{S^{(t)}_j\}_{j=1}^k$. The next iteration is then obtained by taking a gradient (with $\gamma$ as step size) over the quadratic loss over all such data points $\{i: (x_i,y_i) \in S^{(t)}_j\}$ for all $j \in [k]$.

On the other hand, in the $t$-th iteration, the gradient EM algorithm uses the current estimate of $\{\theta^*_j\}_{j=1}^k$, namely $\{\theta^{(t)}_j\}_{j=1}^k$ to compute the \emph{soft-min} probabilities $p_{\theta^{(t)}_1,\ldots,\theta^{(t)}_k}(x_i,y_i;\theta^{(t)}_j)$ for all $j \in [k]$ and $i \in [n]$. Then, using these probabilities, the algorithm takes a gradient of the \emph{soft-min} loss function with step size $\gamma$ to obtain the next iteration.

We begin by assuming the covariates $x_i \stackrel{i.i.d}{\sim} \cN(0,I_d)$. Note that this assumption serves as a natural starting point of analyzing several EM and AM algorithms (\cite{balakrishnan2017statistical,yi2014alternating,yi2016solving,netrapalli2015phase,ghosh2020alternating}). Furthermore, as stated earlier, we emphasize that in order to obtain convergence, we need to understand the behavior of restricted covariates in the agnostic setting. 
We require Gaussians, because the behavior of restricted Gaussians are well studied in statistics \cite{tallis-mgf} and we use several such classical results.

We first consider the min-loss and employ the gradient AM algorithm, similar to \cite{pal2022learning}. In particular, we show that the iterates returned by the gradient AM algorithm after $T$ iterations, $\{\theta_j^{(T)}\}_{j=1}^k$ satisfy
\begin{align*}
    \|\theta_j^{(T)} - \theta^*_j\| \leq \rho^T \| \theta_j^{(0)} - \theta^*_j\| + \delta,
\end{align*}
with high probability (where $\rho <1$) provided $n$ is large enough and $ \| \theta_j^{(0)} - \theta^*_j\| \leq c_{\mathsf{ini}}\|\theta^*_j\|$. Here $c_{\mathsf{ini}}$ is the initialization parameter and $\delta$ is the error floor that stems from the agnostic setting and the gradient AM update (see \cite{balakrishnan2017statistical} where, even with generative setup, an error floor is shown to be unavoidable). Here $\delta$ depends on the step size of the gradient AM algorithm as well as the several geometric properties of the problem like misspecification and separation. 
However, the result of \cite{pal2022learning} in this regard requires an initialization of $\{\theta_i\}_{i=1}^k$ within a radius of $\mathcal{O}(\frac1{\sqrt{d}})$ of the corresponding $\{\theta^\ast_i\}_{i=1}^k$ which we improve on. 


In this paper, we show that it suffices for the initial parameters to be within a (constant) $\Theta(1)$ radius for convergence, provided the geometric parameter $\Delta-\lambda$ is large enough. The $\Theta(1)$ initialization matches the standard (non agnostic, generative) initialization for mixed linear regression (see \cite{yi2014alternating,yi2016solving}). In order to analyze the gradient AM algorithm we need to characterize the behavior of covariates $\{x_i\}_{i=1}^n$ restricted to sets $\{S^*_j\}_{j=1}^k$. In particular we need to control the norm of such restricted Gaussians as well as control the minimum singular value of a random matrix whose rows are made of such random variables.
Specifically, we require (i) a lower bound on the minimum singular value of $\frac{1}{n}\sum_{x_i \in S} x_i x_i^T$, where  the set $S$ is problem dependent, (ii) an upper bound on $\|x_i\|$ where $x_i \in S$ and (iii) a concentration on $\langle x_i,u \rangle$ where $u$ is some vector and $x_i \in S$.

In order to obtain the above,  we leverage the properties of restricted Gaussians (\cite{tallis-mgf,ghosh2019max}) on a (generic) set with Gaussian volume bounded away from zero and show that the resulting distribution of the covariates is sub Gaussian with non-zero mean and constant parameter. We obtain upper bounds on the shift and the sub Gaussian parameter. We would like to emphasize that in the realizable setup of mixed linear regressions, as shown in \cite{yi2014alternating,yi2016solving} such a characterization may be obtained with lesser complication. However, in the agnostic setup, it turns out to be quite non-trivial.

Moreover, in gradient AM, the setup is complex since the sets are formed by the current iterates of the algorithm (and hence random), unlike  $\{S^*_j\}_{j=1}^k$, which are fixed. In order to handle this, we employ re-sampling in each iteration to remove the inter-iteration dependency. We would like to emphasize that sample splitting is a standard technique in the analysis of AM type algorithms and several papers (e.g. \cite{yi2014alternating,yi2016solving,ghosh2020alternating} for mixed linear regression, \cite{netrapalli2015phase} for phase retrieval and \cite{ghosh2020efficient} for distributed optimization) employ such a technique. While this is not desirable, this is a way to remove the inter iteration dependence that comes through data points. Finer techniques like leave-one-out analysis (LOO) is also used (\cite{chen2019gradient}) but for simpler problems (like phase retrieval) since the LOO updates are quite non-trivial. This problem exaggerates further in the agnostic setup. Hence, as a first step, in this paper we assume a simpler sample split based framework and keep finer techniques like LOO as future direction.

We would also like to take this opportunity to correct an error in \citep[Theorem 4.2]{pal2022learning}. In particular, that theorem  should hold only for  Gaussian covariates,  not for general bounded covariates as stated. It was incorrectly assumed in that paper that the lower bound on the singular value mentioned above holds for general covariates.

We then move on to analyze the \emph{soft-min} loss and analyze the gradient EM algorithm. Here, we show similar contraction guarantees in the parameter space as in gradient EM. There are several technical difficulties that arise in the analysis of the gradient EM algorithm for agnostic mixed linear regressions-- (i) First, we show that if $(x_i,y_i) \in S^*_j$, then the soft-min probability $p_{\theta^*_1,\ldots,\theta^*_k}(x_i,y_i;\theta^*_j) \geq 1- \eta$, where $\eta$ is small. (ii) Moreover, using the initialization condition, and the properties of the soft-max function (\cite{gao2017properties}) we argue that $p_{\theta^{(t)}_1,\ldots,\theta^{(t)}_k}(x_i,y_i; \theta^{(t)}_j)$ is close to $p_{\theta^*_1,\ldots,\theta^*_k}(x_i,y_i;\theta^*_j)$, where $\{\theta^{(t)}_j\}_{t=1}^T$ are the updated of the gradient EM algorithm.

Our results for agnostic gradient AM and EM consist some extra challenge over the existing results in literature (\cite{balakrishnan2017statistical,waldspurger2018phase}). Usually, the population operator with Gaussian covariates are analyzed (mainly in EM, see \cite{balakrishnan2017statistical}), and then a finite sample guarantee is obtained using concentration arguments. However, in our setup, with the soft-min probabilities and the $\min$ function, it is not immediately clear how to analyze the population operator. Second, in the gradient EM algorithm, we do not split the samples over iterations, and necessarily handle the inter-iteration dependency of covariates.

Furthermore, to understand the \emph{soft-min} and \emph{min} loss better, in Section~\ref{sec:gen_guarantees}, we obtain generalization guarantees that involve computing the Rademacher complexity of such function classes. Agreeing with intuition,  the complexity of  \emph{soft-min} and \emph{min} loss class is at most $k$ times the complexity of the learning problem of simple linear regression with quadratic loss. 


\subsection{Related works}
As discussed earlier, most works on the mixture of linear regressions are in the realizable setting, and aim to do parameter estimation. Algorithms like EM and AM are most popularly used to achieve this task. For instance, in \cite{balakrishnan2017statistical}, it was proved that a \emph{suitable initialized} EM algorithm is able to find the correct parameters of the mixed linear regressions. Although \cite{balakrishnan2017statistical} obtains the convergence results within an $\ell_2$ ball, it is then extended to an appropriately defined cone by \cite{klusowski2019estimating}. On the AM side, \cite{yi2014alternating} introduced the AM algorithm for the mixture of $2$ regressions, where the initialization is done by the spectral methods. Then, \cite{yi2016solving} extends that to a mixture of $k$ linear regressions. Perhaps surprisingly, for the case of $2$ lines, \cite{kwon2018global} shows that any random initialization suffices for EM algorithm to converge. In the above mentioned works, the covariates are assumed to be standard Gaussians, which was relaxed in \cite{li2018learning}, allowing Gaussian covariates to have different covariances. Here, near optimal sample as well as computational complexities were achieved albeit not via EM or AM type algorithm.

In another line of work, the convergence rates of AM or its close variants are investigated. In particular, in \citep{ghosh2020alternating,shen2019iterative}, it is shown that AM (or its variants) converge at a double-exponential (super-linear) rate. Recent work, \cite{chandrasekher2021sharp} shows similar results for larger class of problems.

We emphasize that apart from mixture of linear regressions, EM or AM type algorithms are used to address other problems as well. Classically parameter estimation in the mixture of Gaussians is done by EM  mixture of Gaussians (see \cite{balakrishnan2017statistical,daskalakis2014faster} and the references therein). The seminal paper by \cite{balakrishnan2017statistical} addresses the problem of Gaussian mean estimation as well as linear regression with missing covariates. Moreover, AM type algorithms are used in phase retrieval (\cite{netrapalli2015phase,waldspurger2018phase}), parameter estimation in max-affine regression (\cite{ghosh2019max}), clustering in distributed optimization (\cite{ghosh2020efficient}).

In all of the above mentioned works, the covariates are given to the learner. However, there is another line of research that focuses on analyzing AM type algorithms when the learner has the freedom to design the covariates (\cite{yin2018learning,kris2019sampling,mazumdar2020recovery,mazumdar2022learning,pal2021support}).

However, none of these works is directly comparable to our setting. All these works assume a realizable model where the parameters come with the problem setup. However, ours is an agnostic setup, and here there are no optimal parameters associated with the setup, rather solutions of (naturally emerging) loss functions.

Our work is a direct follow up of \cite{pal2022learning}, who introduced the agnostic learning framework for mixed linear regression, and also used the AM algorithm in lieu of empirical risk minimization. Also, \cite{pal2022learning} only considered the min-loss, and neither the soft-min loss nor the EM algorithm, whereas we consider both EM and AM. Moreover, the AM guarantees we obtain are sharper than that of \cite{pal2022learning}.

\subsection{Organization}
We start with the \emph{soft-min} loss function and the gradient EM algorithm in Section~\ref{sec:soft_loss}. In Section~\ref{sec:em_theory}, we obtain the theoretical results of gradient EM. We then move to \emph{min} loss function in Section~\ref{sec:am}, where we analyze the gradient AM algorithm, with theoretical guarantees given in Section~\ref{sec:am_theory}. We present a rough overview of the proof techniques in Section~\ref{sec:proof_sketch}. Finally, in Section~\ref{sec:gen_guarantees}, we provide some generalization guarantees   using Rademacher complexity. We conclude in Section~\ref{sec:con} with a few open problems and future direction. We collection all the proofs (both EM and AM) in Appendix~\ref{app:em} and \ref{app:am}. 

\subsection{Notation}
Throughout this paper, we use $\|.\|$ to denote the $\ell_2$ norm of a $d$ dimensional vector unless otherwise specified. Also for a positive integer $r$, we use $[r]$ to denote the set $\{1,\ldots,r\}$. We use $C,C_1,C_2,\ldots,c,c_1,c_2\ldots $ to denote positive universal constants, the value of which may differ from instance to instance.


\section{Agnostic Mixed Linear Regression-Min-Loss}
\label{sec:am}
In this section, we analyze the min-loss function and analyze gradient AM algorithm. First, recall the definition of $\ell_{\min}(.)$ from Eq.~\ref{eq:minloss}.
Similar to the section above, we are given a set of $n$ data-points $\{x_i,y_i\}_{i=1}^n$, where $x_i \in \mathbb{R}^d$ and $y_i \in \mathbb{R}$ drawn from an unknown distribution $\cD$. We want to obtain
\begin{align*}
    (\theta^*_1,\ldots,\theta^*_k) = \mathrm{argmin} \,\, \mathbb{E}_{(x,y)\sim \mathcal{D}}\ell_{\text{min}} (\theta_1,\ldots,\theta_k; x,y).
\end{align*}
 With the given $n$ datapoints, we aim to learn these $k$ hyperplanes via the AM algorithm (Algorithm~\ref{alg:am_improved}), which tries to minimize the empirical optimization version instead. 




\subsection{Gradient AM Algorithm}
In this section we use the gradient AM algorithm for minimizing $L(\theta_1,\ldots,\theta_k)$. The details of our algorithm is given in Algorithm~\ref{alg:am_improved}. 

First note that here, we split the $n$ samples $\{x_i,y_i\}_{i=1}^n$ into $2T$ disjoint samples where we run Algorithm~\ref{alg:am_improved} for $T$ iterations. We would like to remind that sample splitting is a standard in AM type algorithms  (\cite{yi2014alternating,yi2016solving,ghosh2020alternating,netrapalli2015phase,ghosh2020efficient}). While this is not desirable, this is a way to remove the inter iteration dependence that comes through data points.

Hence, at each iteration of gradient AM we are given $n' = n/2T$ samples. Each iteration consists of $2$ stages (see Algorithm~\ref{alg:am_improved}). In the first stage of the $t$-th iteration, we use 
$n'$ samples to construct the index sets $I^{(t)}_j$  in the following way
\begin{align*}
     I^{(t)}_j &= \lbrace i \in [n']: (y_i ^{(t)}- \inprod{x_i^{(t)}}{\theta^{(t)}_j})^2 
      <  (y_i ^{(t)}- \inprod{x_i^{(t)}}{\theta^{(t)}_{j'}})^2   \rbrace
 \end{align*}
$\forall \,\, j' \in [k]\setminus j$. Here, we collect the data points for which the current estimate of $\theta^*_j$, namely $\theta^{(t)}_j$ is a better (linear) estimator than $\{\theta^{(t)}_{j'}\}$ where $j'\neq j$. Notw that $\{ I^{(t)}_j\}_{j=1}^k$ partitions $[n']$.

At the second stage of gradient AM, we use another set of fresh $n'$ data points to run the gradient update on the set $\{ I^{(t)}_j\}_{j=1}^k$ with step size $\gamma$ to obtain the next iterate $\{\theta^{(t+1)}_j\}_{j=1}^k$. The details is given in Algorithm~\ref{alg:am_improved}.

\begin{algorithm}[t!]
  \caption{Gradient AM for Mixture of Linear Regressions}
  \begin{algorithmic}[1]
 \STATE  \textbf{Input:} $\{x_i,y_i\}_{i=1}^n$, Step size $\gamma$
 \STATE \textbf{Initialization:} Initial iterate $\{\theta^{(0)}_j\}_{j=1}^k$  \\
 \STATE Split all samples into $2T$ disjoint datasets $\{ x_i ^{(t)},y_i^{(t)}\}_{i=1}^{n'}$ with $n' = n/2T$ for all $t =0,1,\ldots,T-1$
  \FOR{$t=0,1, \ldots, T-1 $}
\STATE \underline{Partition:}  \\
 \STATE For all $j\in[k]$, use $n'$ samples to construct index sets $\{I_j^{(t)}\}_{j=1}^k$ such that $\forall \,\, j' \in [k]\setminus j$,
 \vspace{-2mm}
 \begin{align*}
     I^{(t)}_j &= \lbrace i: (y_i ^{(t)}- \inprod{x_i^{(t)}}{\theta^{(t)}_j})^2 
      <  (y_i ^{(t)}- \inprod{x_i^{(t)}}{\theta^{(t)}_{j'}})^2 \rbrace
 \end{align*}
\STATE \underline{Gradient Step:}
\STATE Use fresh set of $n'$ samples to run gradient update
\begin{align*}
    \theta^{(t+1)}_j = \theta^{(t)}_j - \frac{\gamma}{n}\sum_{i \in [n']} \nabla F_i(\theta^{(t)}_j) \, \mathbf{1}\{i \in I^{(t)}_j\},  \,\,\, \forall \,\, j \in [k]
\end{align*}
\vspace{-2mm}
  \STATE where $F_i(\theta^{(t)}_j) = (y_i ^{(t)}- \inprod{x_i^{(t)}}{\theta^{(t)}_j})^2$
  \ENDFOR
  \STATE \textbf{Output:} $\{\theta^{(T)}_j\}_{j=1}^k$
 \end{algorithmic}
  \label{alg:am_improved}
\end{algorithm}

\subsection{Theoretical Guarantees}
\label{sec:am_theory}
In this section, we obtain theoretical guarantees for Algorithm~\ref{alg:am_improved}. Similar to the previous section, we assume $|y_i| \leq b$ for all $i \in [n]$. 
In the following, we consider one iteration of Algorithm~\ref{alg:am_improved}, and show a contraction in parameter space. Let the current parameter estimates are $\{\theta_j\}_{j=1}^k$ and  the corresponding to the index $\{I_j\}_{j=1}^k$. Moreover, let the next iterates are $\{\theta^+_j\}_{j=1}^k$. Unpacking, the next iterate is given by
\begin{align}
    \theta^+_j = \theta_j - \frac{2\gamma}{n} \sum_{i \in I_j} [x_i x_i^T \theta_j - y_i x_i]
    \label{eqn:am_update}
\end{align}
for all $j \in [k]$. We now present our main results of this section.
\begin{theorem}[Gradient AM]
\label{thm:am}
Suppose $x_i \stackrel{i.i.d} {\sim} \cN(0,I_d)$ and that  $n' \geq C\frac{d \log(1/\pi_{\min})}{\pi_{\min}^3}$.  Furthermore,
\begin{align*}
    \|\theta_j - \theta^*_j \| \leq c_{\mathsf{ini}} \|\theta^*_j\|
\end{align*}
for all $j \in [k]$ where $c_{\mathsf{ini}}$ is a small positive constant (initialization parameter). Moreover, let the separation parameter satisfy
\begin{align*}
    \Delta >  \lambda + C_1 \, [c_{\mathsf{ini}} \sqrt{\log(1/\pi_{\min}}) \max_{j\in [k]}  \|\theta^*_j\| + \sqrt{1+ \log(1/\pi_{\min})}].
\end{align*}
Then, running one iteration of Gradient AM with step size $\gamma $, yields $\{\theta^{+}_j\}_{j=1}^k$ satisfying
\begin{align*}
    \|\theta^+_j - \theta^*_j\| & \leq \rho \|\theta_j - \theta^*_j\| + \varepsilon, \quad \text{with probability exceeding}
\end{align*}
$1-C_1 \exp(-C_2 \pi_{\min}^4 n')  -c_1\exp(-P_e n')-\frac{n'}{\mathsf{poly}(d)}$, where $\rho = (1- c \gamma \pi_{\min}^3)$, and the error floor
\begin{align*}
\varepsilon &\leq  C \gamma \lambda \sqrt{d\log d \log(1/\pi_{\min})}+C_1 \gamma (k-1) P_e \\
& \times \left[ d\log d \log(1/\pi_{\min}) \|\theta^*_1\| + C b \sqrt{d\log d \log(1/\pi_{\min})}\right],
\end{align*}
\begin{align*}
	\text{and  } P_e \leq 4 \exp \bigg( - \frac{1}{c_{\mathsf{ini}^2} \max_{j \in [k]}\|\theta^*_j\|^2} [ \frac{\Delta -\lambda}{2} ]^2 \bigg ).
\end{align*}
\end{theorem}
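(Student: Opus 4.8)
The plan is to analyze a single step of Algorithm~\ref{alg:am_improved} by splitting the parameter error into a deterministic \emph{contraction} term and a stochastic \emph{noise} term. Working on the $n'=n/2T$ fresh samples of the current round and subtracting $\theta^*_j$ from \eqref{eqn:am_update}, one writes
\begin{align*}
\theta^+_j-\theta^*_j=\Bigl(I-\tfrac{2\gamma}{n'}\sum_{i\in I_j}x_ix_i^\top\Bigr)(\theta_j-\theta^*_j)\;-\;\tfrac{2\gamma}{n'}\sum_{i\in I_j}\bigl(\langle x_i,\theta^*_j\rangle-y_i\bigr)x_i .
\end{align*}
The first factor will produce the rate $\rho=1-c\gamma\pi_{\min}^3$ once the spectrum of $\tfrac1{n'}\sum_{i\in I_j}x_ix_i^\top$ is controlled, and the second term will be shown to be at most $\varepsilon$. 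Because the per-iteration samples are independent of the current $\{\theta_j\}$ (this is the role of the sample split across the $2T$ blocks), within the round the summands above are i.i.d.\ conditionally on $\{\theta_j\}$, and the only genuinely random set in play is $I_j$.

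Next I would show $I_j$ is close to the population-optimal index set $I^*_j=\{i:(x_i,y_i)\in S^*_j\}$. A sample with $(x_i,y_i)\in S^*_j$ is mislabelled to some $j'\neq j$ only if $|\langle x_i,\theta_j-\theta^*_j\rangle|+|\langle x_i,\theta_{j'}-\theta^*_{j'}\rangle|\gtrsim\Delta-\lambda$; using $\|\theta_{j'}-\theta^*_{j'}\|\le c_{\mathsf{ini}}\|\theta^*_{j'}\|$ and a one-dimensional tail bound for the (mean-shifted) restricted-Gaussian marginal $\langle x_i,u\rangle$, this occurs with probability at most the quantity $P_e$ in the statement, whose smallness follows from the separation condition (which comfortably dominates both the $O(\sqrt{\log(1/\pi_{\min})})$ mean shift and the sub-Gaussian fluctuations of the restricted covariate). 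A Chernoff bound then gives $|I_j\triangle I^*_j|\lesssim (k-1)P_e\,n'$ with probability $1-c_1\exp(-P_e n')$, so $\tfrac1{n'}\sum_{I_j}x_ix_i^\top$ and $\tfrac1{n'}\sum_{I^*_j}x_ix_i^\top$ differ negligibly in operator norm. For the latter I would combine (i) a binomial concentration giving $|I^*_j|/n'\in[\pi_{\min}/2,1]$; (ii) a matrix-concentration estimate for the sub-Gaussian restricted vectors $\{x_i\}_{i\in I^*_j}$ passing to $\mathbb E[xx^\top\mid S^*_j]$, valid once $n'\gtrsim d\log(1/\pi_{\min})/\pi_{\min}^3$; and (iii) the deterministic bounds $c\,\pi_{\min}^2 I\preceq \mathbb E[xx^\top\mid S^*_j]\preceq C\log(1/\pi_{\min})\,I$ from Tallis's formulas for truncated multivariate normals \cite{tallis-mgf,ghosh2019max}. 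Together with $\Pr(S^*_j)\ge\pi_{\min}$ this yields $c\gamma\pi_{\min}^3 I\preceq \tfrac{2\gamma}{n'}\sum_{I_j}x_ix_i^\top\preceq I$ for $\gamma$ small enough, whence $\|I-\tfrac{2\gamma}{n'}\sum_{I_j}x_ix_i^\top\|_{\mathrm{op}}\le 1-c\gamma\pi_{\min}^3=\rho$; this also accounts for the sample-size requirement and the failure probability $C_1\exp(-C_2\pi_{\min}^4 n')$.

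For the noise term I would split $I_j=(I_j\cap I^*_j)\cup(I_j\setminus I^*_j)$. On $I_j\cap I^*_j$ the misspecification bound gives $|\langle x_i,\theta^*_j\rangle-y_i|\le\lambda$, so this contribution is at most $\tfrac{2\gamma\lambda}{n'}\sum_{i\in I^*_j}\|x_i\|\lesssim \gamma\lambda\sqrt{d\log d\,\log(1/\pi_{\min})}$, using the restricted-norm bound $\|x_i\|\lesssim\sqrt{d\log d\,\log(1/\pi_{\min})}$ (which fails with probability only $1/\mathsf{poly}(d)$ per sample, contributing the term $n'/\mathsf{poly}(d)$). On $I_j\setminus I^*_j$, whose cardinality is $\lesssim (k-1)P_e n'$ by the Chernoff bound above, the separation condition forces $|\langle x_i,\theta^*_j\rangle-y_i|\le |y_i|+\|x_i\|\|\theta^*_j\|\le b+\|x_i\|\|\theta^*_j\|$, so each term is bounded by $\|x_i\|^2\|\theta^*_j\|+b\|x_i\|\lesssim d\log d\,\log(1/\pi_{\min})\,\|\theta^*_1\|+b\sqrt{d\log d\,\log(1/\pi_{\min})}$; multiplying by the count reproduces the second line of $\varepsilon$. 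Summing the two pieces and collecting the three failure events yields the theorem.

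The step I expect to be the main obstacle is (iii): the deterministic two-sided spectral control of $\mathbb E[xx^\top\mid S^*_j]$ — equivalently, showing that a standard Gaussian restricted to the Voronoi-type region $S^*_j$ (an intersection of $k-1$ quadratic conditions in $(x,y)$ of Gaussian mass at least $\pi_{\min}$) is sub-Gaussian with $O(1)$ parameter, mean shift $O(\sqrt{\log(1/\pi_{\min})})$, and least singular value $\gtrsim\pi_{\min}^2$ (a thin slab near a hyperplane being the extremal case). Unlike the realizable analyses of \cite{yi2014alternating,yi2016solving}, the region here is defined jointly by covariate and label, so the argument must go through the truncated-moment machinery of \cite{tallis-mgf}; the minimum-singular-value bound in particular directly sets the contraction rate $\rho$. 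A secondary difficulty is making the passage from the data-dependent $I_j$ to the fixed $I^*_j$ rigorous while keeping the iterates independent of the current batch, which is precisely what the fresh-sample split in Algorithm~\ref{alg:am_improved} is designed for.
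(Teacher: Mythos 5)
Your proposal follows essentially the same route as the paper's proof: the identical contraction/noise decomposition of the update (the paper's $T_1$/$T_2$ split), the same mislabeling probability $P_e$ controlled via the separation condition and the mean-shifted sub-Gaussian restricted covariates, the same restricted-Gaussian spectral lemma giving $\sigma_{\min}\gtrsim\pi_{\min}^2$ on a set of mass $\pi_{\min}$, and the same split of the noise term over $I_j\cap I_j^*$ versus its complement with the binomial count $\lesssim(k-1)P_e n'$. The only cosmetic difference is that you pass from $I_j$ to $I_j^*$ by an operator-norm perturbation, whereas the paper uses the simpler one-sided monotonicity $\sigma_{\min}(\sum_{I_j})\ge\sigma_{\min}(\sum_{I_j\cap I_j^*})$; both are fine.
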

The proof of Theorem~\ref{thm:am} is deferred to Appendix~\ref{app:am}. We make a few remarks here.
\begin{remark}[Contraction factor $\rho$]
    We observe that if $\rho <1$, the above result implies a contraction in parameter space with a slack of $\varepsilon$, which we call the error-floor. Note that by choosing $\gamma < \frac{c_0}{ (1-\eta) \pi_{\min}^3}$, where $c_0$ is a small constant, we can always make $\rho <1$.
\end{remark}
\begin{remark}[Error floor $\varepsilon$]
Observe that the error floor $\varepsilon$ depends linearly on the step size $\gamma$, similar to any standard stochastic optimization problem. The error floor also decays linearly with the misspecification parameter $\lambda$, which may be thought as an agnostic bias. In previous works \cite{yi2016solving,yi2014alternating}, even in the realizable setting, either the authors assume $\lambda =0$ or very small. In a related field of online learning (multi armed bandits and reinforcement learning in linear framework), this model misspecification also impacts the regret in a linear fashion as seen by \citep[Theorem 5]{jin20a}. Even in these realizable setting, is it unknown how to tackle large $\lambda$.
\end{remark}
\begin{remark}[Re-sampling]
    Note that the gradient AM algorithm of ours requires re-sampling fresh data points in every iteration. Similar to the analysis of the gradient EM, here also we need to control the lower spectrum of a random matrix consisting Gaussians restricted to a set. From the structure of gradient AM, this set here is given by $S^{(t)}_j = \{ (x_i,y_i): i \in I^{(t)}_j \}$. Note that without re-sampling of data points, analyzing the behavior of Gaussians on the sets $\{S^{(t)}_j\}_{j=1}^k$ turns out to be quite non-trivial since $\{S^{(t)}_j\}_{j=1}^k$ depends on $\{\theta^{(t)}_j\}_{j=1}^k$ which depends on all the data point $\{x_i,y_i\}_{i=1}^n$. 
\end{remark}
\begin{remark}[Probability of error $P_e$]
   One major part in showing the convergence guarantee is to show that provided good initialization, the probability of a datapoint lying in an incorrect index set is at most $P_e$. With a closer look, it turns out that if the problem is separated enough ($\Delta$ large) and the initialization is suitable ($c_{\mathsf{ini}}$ is small), $P_e$ decays exponentially fast. Hence, in such a setup, the second term in $\varepsilon$ is quite small.
\end{remark}
\begin{remark}[Sample complexity]
   Note that we require the number of samples satisfying the following: $n \geq C \,\, \frac{d \log (1/\pi_{\min})}{\pi_{\min}^3}$, where the dependence on $k$ comes through $\pi_{\min}$ (and from definition, we have $\pi_{\min} \leq 1/k$). Note that information theoretically, we only require $\Omega(kd)$ samples, since there are $kd$ unknown parameters to learn. Hence, our sample complexity is optimal in $d$. However, it is sub-optimal in $k$ compared to the standard (non-agnostic) AM guarantees (\cite{yi2014alternating,yi2016solving}). The sub-optimality comes from the proof techniques we use for the agnostic setting. In particular, we use spectral properties of a restricted Gaussian vectors on a set with (Gaussian) volume at least $\pi_{\min}$. As shown in \cite{ghosh2019max}, this gives rise to a dependence of $1/\pi_{\min}^3$ in sample complexity. Moreover, in \cite{ghosh2019max}, it is argued (albeit in a different problem), that when spectral properties of such restricted Gaussians are employed,  a $1/\pi_{\min}^3$ dependency is in general unavoidable.
\end{remark}
\section{EM algorithm for Soft-Min Loss}
\label{sec:soft_loss}
In this section we analyze the \emph{soft-min} loss function and propose gradient EM algorithm to address this. Recall the definition of $\ell_{\text{softmin}}(.)$ from Eq.~\ref{eq:softmin}. Moreover, recall that we are given a set of $n$ data-points $\{x_i,y_i\}_{i=1}^n$, where $x_i \in \mathbb{R}^d$ and $y_i \in \mathbb{R}$ drawn from an unknown distribution $\cD$. Our goal here is to obtain
\begin{align*}
    (\theta^*_1,\ldots,\theta^*_k) = \mathrm{argmin} \,\, \mathbb{E}_{(x,y)\sim \mathcal{D}}\ell_{\text{softmin}} (\theta_1,\ldots,\theta_k; x,y).
\end{align*}
We aim to learn these $k$ hyperplanes through the given data. The EM algorithm (Algorithm~\ref{alg:grad-em}) tries to minimize the empirical version of the problem.

\begin{algorithm}[t!]
	\caption{Gradient EM for Mixture of Linear Regressions}
	\begin{algorithmic}[1]
		\STATE  \textbf{Input:} $\{x_i,y_i\}_{i=1}^n$, Step size $\gamma$ 
		\STATE \textbf{Initialization:} Initial iterate $\{\theta^{(0)}_j\}_{j=1}^k$  \\
		\FOR{$t=0,1, \ldots, T-1 $}
		\STATE \underline{Compute Probabilities:}  \\
		\STATE Compute $p_{\theta^{(t)}_1,..,\theta^{(t)}_k}(x_i,y_i;\theta^{(t)}_j)$ for all $j \in [k]$ and $i \in [n]$
		\STATE \underline{Gradient Step:} (for all $j \in [k]$)
		\vspace{-1mm}
		\begin{align*}
			\theta^{(t+1)}_j = \theta^{(t)}_j - \frac{\gamma}{n}\sum_{i =1}^n p_{\theta^{(t)}_1,..,\theta^{(t)}_k}(x_i,y_i;\theta^{(t)}_j) \nabla F_i(\theta^{(t)}_j), 
		\end{align*}
		\vspace{-2mm}
		\STATE where $F_i(\theta^{(t)}_j) = (y_i - \inprod{x_i}{\theta^{(t)}_j})^2$
		\ENDFOR
		\STATE \textbf{Output:} $\{\theta^{(T)}_j\}_{j=1}^k$
	\end{algorithmic}
	\label{alg:grad-em}
\end{algorithm}

\subsection{Gradient EM Algorithm}
We propose EM based algorithm for minimizing the empirical loss function $L(\theta_1,..,\theta_k)$. In particular we propose a variant of EM, popularly known as gradient EM for this. The steps are given in Algorithm~\ref{alg:grad-em}. Each iteration of gradient EM consists of two steps. First, in the compute probability step, based on the current estimates of $\{\theta^*_j\}_{j=1}^k$, namely $\{\theta^{(t)}\}_{j=1}^k$, Algorithm~\ref{alg:grad-em} computes the soft-min probabilities computed using the current iterates $\{\theta^{(t)}\}_{j=1}^k$, which is $p_{\theta^{(t)}_1,\ldots,\theta^{(t)}_k}(x_i,y_i;\theta^{(t)}_j)$ for all $j \in [k]$ and $i \in [n]$. In the subsequent step, using these probabilities, the algorithm takes a gradient step with step size $\gamma$. In particular, for the $j$-th iterate $\theta_j^{(t)}$, gradient EM weights the standard quadratic loss computed on the $i$-th data point, given by $(y_i - \inprod{x_i}{\theta^{(t)}_j})^2$ and takes the gradient to obtain the next iterate $\{\theta^{(t+1)}_j\}_{j=1}^k$. We truncate Algorithm~\ref{alg:grad-em} after $T$ steps.

We split the $n$ samples $\{x_i,y_i\}_{i=1}^n$ into $2T$ disjoint samples where we run Algorithm~\ref{alg:grad-em} for $T$ iterations. Again sample splitting is a standard in EM type algorithms  (\cite{balakrishnan2017statistical,kwon2018global}). Hence, at each iteration of gradient EM we are given $n' = n/2T$ samples. Each iteration consists of $2$ stages (see Algorithm~\ref{alg:grad-em}). The first $n'$ samples are used to compute the probabilities, and the next set of samples are used to take the gradient step.

\subsection{Theoretical Guarantees}
\label{sec:em_theory}
We now look at the convergence guarantees of Algorithm~\ref{alg:grad-em}. In particular, here we consider one iterate of the gradient EM algorithm with current estimate  $(\theta_1,\ldots,\theta_k)$. Also, assume that the next iterate with these current estimates is given by $(\thplusone,\ldots,\thplusk)$. Unrolling the iterate, we have
\begin{align} \label{eqn:em_onestep}
	\theta^+_j = \theta_j - \frac{2\gamma}{n'} \sum_{i=1}^{n'} p_{\thetaall}(x_i,y_i;\theta_j)\left( x_i x_i^T \theta_j - y_i x_i \right).
\end{align}
for all $j\in [k]$.
Furthermore, we assume $|y_i| \leq b$ for all $i \in [n']$ for a non-negative $b$. With this, we are now ready to present the main result of this section.
\begin{theorem}[Gradient EM]
\label{thm:em}
Suppose that  $x_i \stackrel{i.i.d}{\sim}  \cN(0,I_d)$ and that $n' \geq C \,\, \frac{d \log (1/\pi_{\min})}{\pi_{\min}^3}$. Moreover,
	\begin{align*}
		\|\theta_j - \theta^*_j\| \leq c_{\mathsf{ini}} \|\theta^*_j\|
	\end{align*}
	for all $j \in [k]$, where $c_{\mathsf{ini}}$ is a small positive constant (initialization parameter) satisfying  $c_{\mathsf{ini}} <c_2 \frac{\lambda}{\sqrt{\log(1/\pi_{\min}})  \|\theta^*_1\|}$. Then running one iteration of gradient EM algorithm with step size $\gamma$ yields $\{\theta^+_j\}_{j=1}^k$ satisfying
	\begin{align*}
		\|\theta^+_j - \theta^*_j\| \leq \rho \| \theta_j - \theta^*_j\| + \varepsilon,
	\end{align*}
	with probability at least  $1-C_1 \exp(-c_1 \pi_{\min}^4 n') - C_2\exp(-c_2 d)- n'/\mathsf{poly}(d) -n' C_3\exp( - \frac{\lambda^2}{c_{\mathsf{ini}^2} \|\theta^*_1\|^2})$, where 
	\begin{align*}
	\varepsilon &\leq C\gamma \lambda \sqrt{d \log d \log(1/\pi_{\min})} \\
  & + C_1 \gamma \eta' (b +  \sqrt{d\log d \log(1/\pi_{\min})} )^2(c_{\mathsf{ini}} + 1)) \|\theta^*_1\|,
\end{align*} 
	$\rho = (1-2\gamma c (1-\eta) \pi_{\min}^3)$, $\eta' = e^{-((\Delta - C\lambda)^2 - C_2\lambda^2)}$ and $\eta = \left( \frac{1 - e^{-C_2\lambda^2} + (k-1) e^{-(\Delta - C\lambda)^2}}{1 + (k-1) e^{-(\Delta - C\lambda)^2}} \right)$, with $C,C_1,..,c,c_1,..$ as universal positive constants.
\end{theorem}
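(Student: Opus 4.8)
The plan is to analyze the one-step map \eqref{eqn:em_onestep} for a fixed index $j$ and show it contracts towards $\theta^*_j$ up to the claimed floor. Writing $x_i x_i^\top \theta_j - y_i x_i = x_i x_i^\top(\theta_j - \theta^*_j) + x_i(\inprod{x_i}{\theta^*_j} - y_i)$ and abbreviating $p_i := p_{\theta_1,\ldots,\theta_k}(x_i,y_i;\theta_j)$, the update reads
\begin{align*}
\theta^+_j - \theta^*_j = \Bigl(I - \tfrac{2\gamma}{n'}\sum_{i=1}^{n'} p_i\, x_i x_i^\top\Bigr)(\theta_j - \theta^*_j) - \tfrac{2\gamma}{n'}\sum_{i=1}^{n'} p_i\, x_i\bigl(\inprod{x_i}{\theta^*_j} - y_i\bigr).
\end{align*}
The first term will produce the factor $\rho$ and the second the floor $\varepsilon$. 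Since the $n'$ samples of this iteration are drawn independently of the current iterate $\{\theta_l\}_{l}$, we may treat $\{\theta_l\}$ as deterministic; the sets $S^*_j$ and the quantities $\lambda,\Delta,\pi_{\min}$ are deterministic by definition.

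For the first term I would control the weighted Gram matrix $M := \tfrac1{n'}\sum_i p_i x_i x_i^\top$. The upper bound $\|M\| \le \tfrac1{n'}\|\sum_i x_i x_i^\top\| \le 2$ (w.h.p.\ since $n' \gtrsim d$) handles the no-overshoot side. The heart is the lower bound $\lambda_{\min}(M) \gtrsim (1-\eta)\pi_{\min}^3$: (i) restrict to indices with $(x_i,y_i)\in S^*_j$, so $M \succeq \tfrac1{n'}\sum_{i:(x_i,y_i)\in S^*_j} p_i\, x_i x_i^\top$; (ii) show $p_i \ge 1-\eta$ there — on $S^*_j$ the residual against $\theta^*_l$ is at least $\Delta$ for $l\neq j$ and against $\theta^*_j$ at most $\lambda$, so $p_{\theta^*_1,\ldots,\theta^*_k}(x_i,y_i;\theta^*_j) \ge \bigl(1+(k-1)e^{-(\Delta^2-\lambda^2)}\bigr)^{-1}$, and a perturbation from $\theta^*$ to the current iterate (below) loses only a $1-e^{-C_2\lambda^2}$-type factor, giving exactly the stated $\eta$; (iii) invoke the restricted-Gaussian spectral estimate: with $x_i\sim\cN(0,I_d)$ and $\Pr_\cD((x,y)\in S^*_j)\ge\pi_{\min}$, the vectors $\{x_i:(x_i,y_i)\in S^*_j\}$ are sub-Gaussian with a mean shift of norm $O(\sqrt{\log(1/\pi_{\min})})$ and $\lambda_{\min}\bigl(\E[xx^\top\mid (x,y)\in S^*_j]\bigr)\gtrsim\pi_{\min}^2$, so that with $n'\gtrsim d\log(1/\pi_{\min})/\pi_{\min}^3$ a matrix-concentration argument yields $\lambda_{\min}\bigl(\tfrac1{n'}\sum_{i:(x_i,y_i)\in S^*_j} x_i x_i^\top\bigr)\gtrsim\pi_{\min}^3$ with probability $1-C_1 e^{-c_1\pi_{\min}^4 n'}$. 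Combining, $\|I-2\gamma M\|\le 1-2c\gamma(1-\eta)\pi_{\min}^3=\rho$ for $\gamma$ small.

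For the floor term I would split the sum by whether $(x_i,y_i)\in S^*_j$. On $S^*_j$, $|\inprod{x_i}{\theta^*_j}-y_i|\le\lambda$ and $p_i\le1$, so this part is $\le 2\gamma\lambda\cdot\tfrac1{n'}\sum_i\|x_i\| \le C\gamma\lambda\sqrt{d\log d\,\log(1/\pi_{\min})}$, using $\max_{i\le n'}\|x_i\|\lesssim\sqrt{d\log d}$ (costing the $n'/\mathsf{poly}(d)$ term) and the restricted-Gaussian shift. For $i$ with $(x_i,y_i)\in S^*_l$, $l\neq j$, the same separation/misspecification estimate gives $p_{\theta^*}(x_i,y_i;\theta^*_j)\le e^{-(\Delta^2-\lambda^2)}$, hence after the $\theta^*\!\to\!\theta$ perturbation $p_i\le\eta'=e^{-((\Delta-C\lambda)^2-C_2\lambda^2)}$, while $|\inprod{x_i}{\theta^*_j}-y_i|\le\|x_i\|\|\theta^*_j\|+b$ (bounding $y_i$ via the fitting line on $S^*_l$ and $|y_i|\le b$, which is where the $(c_{\mathsf{ini}}+1)\|\theta^*_1\|$ appears), so this part is $\le C_1\gamma\eta'(b+\sqrt{d\log d\log(1/\pi_{\min})})^2(c_{\mathsf{ini}}+1)\|\theta^*_1\|$. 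Adding the two pieces gives the stated $\varepsilon$. The perturbation claim used throughout — that $p_i$ is uniformly close to $p_{\theta^*_1,\ldots,\theta^*_k}(x_i,y_i;\theta^*_j)$ — follows from Lipschitzness of the soft-max map (\cite{gao2017properties}): the logits move by $\beta\bigl|(y_i-\inprod{x_i}{\theta_l})^2-(y_i-\inprod{x_i}{\theta^*_l})^2\bigr|\lesssim\|x_i\|\,c_{\mathsf{ini}}\|\theta^*_l\|(b+\|x_i\|\|\theta^*_1\|)$, which under $c_{\mathsf{ini}}<c_2\lambda/(\sqrt{\log(1/\pi_{\min})}\|\theta^*_1\|)$ stays $O(\lambda^2)$, so the probabilities change by a quantity absorbed into $\eta,\eta'$.

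A union bound over the failure events — matrix concentration ($C_1 e^{-c_1\pi_{\min}^4 n'}$), the sub-Gaussian shift and condition-number events ($C_2 e^{-c_2 d}$), the uniform norm bound ($n'/\mathsf{poly}(d)$), and the count of points falling in each $S^*_j$ ($n' C_3 e^{-\lambda^2/(c_{\mathsf{ini}}^2\|\theta^*_1\|^2)}$) — yields the claimed success probability. I expect the main obstacle to be step (iii): the lower bound on $\lambda_{\min}$ of the Gram matrix of Gaussians \emph{restricted} to the problem-defined set $S^*_j$. Unlike the realizable case, $S^*_j$ is characterized only through the population-loss minimizers, so one must argue purely from $\Pr_\cD((x,y)\in S^*_j)\ge\pi_{\min}$; controlling the mean shift and the smallest eigenvalue of the restricted second-moment matrix via properties of truncated Gaussians (\cite{tallis-mgf,ghosh2019max}) and transferring to the empirical matrix with the correct $1/\pi_{\min}^3$ sample dependence is the technical core. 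The probability-perturbation bookkeeping is the second most delicate point, as it is precisely what dictates the quantitative restriction on $c_{\mathsf{ini}}$.
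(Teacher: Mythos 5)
Your decomposition and overall strategy match the paper's proof almost exactly: split the gradient by membership in $S^*_1$, extract a contraction factor from $\sigma_{\min}$ of a weighted Gram matrix of Gaussians restricted to $S^*_1$ (lower-bounded by $c(1-\eta)\pi_{\min}^3$ via the truncated-Gaussian spectral lemma), bound the soft-min weights by $1-\eta$ on $S^*_1$ and by $\eta'$ off it, and control the residuals via the misspecification parameter and the $\sqrt{d\log d\log(1/\pi_{\min})}$ norm bound. The only structural difference is cosmetic: you keep all indices in the contraction matrix $M$ and push only the residuals $x_i(\inprod{x_i}{\theta^*_j}-y_i)$ into the floor, whereas the paper moves the entire gradient contribution of the off-$S^*_1$ indices into the floor term $T_{12}$; since the off-$S^*_1$ summands of $M$ are positive semidefinite, your variant yields the same $\rho$ and a floor of the same order.

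One step in your write-up would fail as stated and needs to be repaired the way the paper does it. You bound the logit perturbation by
$\bigl|(y_i-\inprod{x_i}{\theta_l})^2-(y_i-\inprod{x_i}{\theta^*_l})^2\bigr|\lesssim\|x_i\|\,c_{\mathsf{ini}}\|\theta^*_l\|\cdot(\cdots)$
and claim this stays $O(\lambda^2)$ under the hypothesis on $c_{\mathsf{ini}}$. But $\|x_i\|\asymp\sqrt{d}$, so the Cauchy--Schwarz bound $|\inprod{x_i}{\theta_l-\theta^*_l}|\le\|x_i\|\|\theta_l-\theta^*_l\|$ is off by a factor of order $\sqrt{d}$ and does not reduce to $O(\lambda)$. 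The correct argument (the paper's Lemma on $\eta$-closeness) treats $\inprod{x_i}{\theta_l-\theta^*_l}$ as a one-dimensional sub-Gaussian variable in the fixed direction $\theta_l-\theta^*_l$: after centering at the conditional mean $\mu_\tau$ of the restricted covariate, its sub-Gaussian parameter is $O\bigl(\sqrt{1+\log(1/\pi_{\min})}\,c_{\mathsf{ini}}\|\theta^*_l\|\bigr)=O(\lambda)$ by the hypothesis on $c_{\mathsf{ini}}$, and the mean-shift contribution $\|\mu_\tau\|\|\theta_l-\theta^*_l\|$ is likewise $O(\lambda)$. This is a probabilistic, per-sample statement, failing with probability $\exp(-c\lambda^2/(c_{\mathsf{ini}}^2\|\theta^*_1\|^2))$, and union-bounding it over the $n'$ samples is exactly what produces the $n'C_3\exp(-\lambda^2/(c_{\mathsf{ini}}^2\|\theta^*_1\|^2))$ term in the failure probability --- which you listed but attributed to counting points in $S^*_j$. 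With that step replaced by the directional concentration argument, your proof goes through.
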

We defer the proof of the theorem in Appendix~\ref{app:em}. The remarks we made after the AM algorithm continues to hold here as well. 
\begin{remark}[Error floor $\varepsilon$]
Observe that the error floor $\varepsilon$ depends linearly on the step size $\gamma$. The error floor also decays linearly with the misspecification parameter $\lambda$ and an exponentially decaying term dependent on the gap.
\end{remark}
\textbf{Discussion and Comparison between gradient EM and AM:} Note that both the algorithms require initialization and provides exponential convergence with error floor. However, gradient AM minimizes an intuitive min-loss while gradient EM minimizes \emph{optimal} (maximum likelihood in the generative setup) soft-min loss. Moreover, the gradient AM algorithm requires the separation $\Delta  = \Omega(\lambda + \sqrt{\log k}(1+ c_{\mathsf{ini}}))$ (exact condition in Theorem~\ref{thm:am}), whereas we do not have any such requirement for gradient EM. On the flip side, the convergence of gradient EM requires a condition on the initialization parameter $c_{\mathsf{ini}}$ that depends on misspecification $\lambda$, whereas for gradient AM algorithm, no such restriction is imposed. 
\section{Proof Sketches}
\label{sec:proof_sketch}
In this section, we present a rough sketch of the proof of Theorems~\ref{thm:am} and \ref{thm:em}.
\subsection{Gradient AM (Theorem~\ref{thm:am})}
For gradient AM algorithm, based on the current iterates $\{\theta_j\}_{j=1}^k$, we first construct the index sets $\{I_j\}_{j=1}^k$ using $n'$ fresh samples, where $I_j$ consists of all such indices such that $\theta_j$ is a better predictor compared to the other parameters. Similarly, one can construct $\{I^*_j\}_{j=1}^k$ based on $\{\theta^*_j\}_{j=1}^k$. Unrolling gradient AM update (Eq.~\ref{eqn:am_update}), using another set of $n'$ samples we have
\begin{align*}
		\|\thplusone - \theta^*_1\| = \| \theta_1 - \theta^*_1 -  \frac{2\gamma}{n'} \sum_{i\in I_1 }  \left( x_i x_i^T \theta_1 - y_i x_i \right)\|.
\end{align*}
Similar to the gradient EM setup, it turns out that we need to lower bound $\sigma_{\min}(\frac{1}{n'}\sum_{i \in I_j} x_i x_i^T)$. Note that since we use $n'$ fresh samples to construct $I_j$, the set can be considered fixed with respect to the samples used in the gradient step and we can leverage Lemma~\ref{lem:restricted}. We use $\sigma_{\min}(\frac{1}{n'}\sum_{i \in I_1} x_i x_i^T) \geq \sigma_{\min}(\frac{1}{n'}\sum_{i \in I_1 \cap I^*_1} x_i x_i^T)$. Thanks to the suitable initialization and Lemma~\ref{lem:ini_hard}, we show that $|I_1 \cap I^*_1| $ is big enough, yielding a singular value lower bound of $\approx \pi_{\min}^3$. The control of other terms are done similar to the gradient EM setup, and upon combining, we get the final theorem.
\subsection{Gradient EM (Theorem~\ref{thm:em})}
Recall that we consider one iteration of Algorithm~\ref{alg:grad-em} with current and next iterates as $\{\theta_j\}_{j=1}^k$ and $\{\theta^+_j\}_{j=1}^k$ respectively. Recall the update given by Eq.~\ref{eqn:em_onestep}. Without loss of generality, we focus on $j=1$ and use shorthand $p(\theta_1)$ to denote $p_{\thetaall}(x_i,y_i;\theta_1)$. With this we have
\begin{align*}
		\|\thplusone - \theta^*_1\| = \| \theta_1 - \theta^*_1 -  \frac{2\gamma}{n'} \sum_{i=1}^{n'} p(\theta_1) \left( x_i x_i^T \theta_1 - y_i x_i \right)\|.
\end{align*}
We now break the sum to indices $i: (x_i,y_i) \in S^*_1$ and otherwise. When we look at indices such that $(x_i,y_i) \in S^*_1$, after a few algebraic manipulation, it turns out we need to lower bound $\sigma_{\min}[\frac{1}{n'}\sum_{i:(x_i,y_i) \in S^*_1}  x_i x_i^T ]$. Since $\Pr(x_i:(x_i,y_i) \in S^*_1) \geq \pi_{\min}$ by definition, leveraging properties of restricted Gaussians (Lemma~\ref{lem:restricted}), we obtain $\sigma_{\min}[\frac{1}{n'}\sum_{i:(x_i,y_i) \in S^*_1} (1-\eta) x_i x_i^T ] \geq (1-\eta)\pi_{\min}^3$. Furthermore, leveraging the fact that if $(x_i,y_i) \in S^*_1$, we have $p(\theta^*_1) \geq 1-\eta$ (Lemma~\ref{lem:eta_closeness}), and using the norm upper bound on restricted Gaussians (Lemma~\ref{lem:norm}) we control such indices. Finally, combining all the terms and using the geometric parameters succinctly, we obtain the desired result.

\section{Generalization Guarantees}
\label{sec:gen_guarantees}
In this section, we obtain generalization guarantees for the \emph{soft-min} loss functions. Note that similar generalization guarantee for the \emph{min} loss function has appeared in \cite{pal2022learning}. 

We learn a mixture of functions from $\cX \rightarrow \cY$ for $\cX \subseteq \mathbb{R}^{d}$ fitting data distribution $\cD$ over $(\cX, \cY)$. A learner has access to samples $\{x_i, y_i\}_{i=1}^n$. There is a base class $\cH: \cX \rightarrow \cY$. Here, we work with the setup of list decoding where the learner outputs a list while testing. In \cite{pal2022learning} the list decodable function class has been defined. We rewrite here for completeness.

\begin{definition}
Let $\cH$ be the base function class $\cH$. We  construct a vector valued $k$-list-decodable  function class, namely $\bar{\cH}_k$ such that any $\bar{h} \in \bar{\cH}_k$ is defined as \[
\bar{h} = (h_1(\cdot), \cdots, h_k(\cdot))
\]
such that $h_j \in \cH_j$ for all $j \in [k]$. Thus $\bar{h}$'s map $\cX \rightarrow \cY^{k}$ and form the new function class $\bar{\cH}_k$. 
\end{definition}
To ease notation, we omit the $k$ in $\bar{\cH}$ when clear from context. 

In our setting,  the base function class is linear, i.e., for all $j \in [k]$
\begin{align*}
    \cH_j = \cH = \{\inner{\theta}{\cdot}: \forall \theta \in \reals^{d} \text{ s.t } \norm{\theta}_2 \leq R\},
\end{align*}
and the base loss function $\ell: \cY \times \cY \rightarrow \reals^+$ is given by
\begin{align*}
    \ell(h_j(x),y)) = (y - \inner{x}{\theta_j})^2.
\end{align*}
In what follows, we obtain generalization guarantees for bounded covariates and response, i.e., $|y| \leq 1$ and $\|x\| \leq 1$. 
\begin{claim}
\label{claim:bounded}
    For bounded regression problem, the loss function $\ell(h_j(x),y))$ is Lipschitz with parameter $2(1+R)$ with respect to the first argument.
\end{claim}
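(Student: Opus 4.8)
\textbf{Proof proposal for Claim~\ref{claim:bounded}.} The plan is to reduce the statement to an elementary one-variable estimate. Fix the second argument $y$ with $|y|\le 1$, and view the base loss as a function of its first argument $u = h_j(x) = \inner{x}{\theta_j}$, namely $g_y(u) = (y-u)^2$. The first thing I would record is the range of $u$: by Cauchy--Schwarz, $|u| = |\inner{x}{\theta_j}| \le \|x\|_2\,\|\theta_j\|_2 \le 1\cdot R = R$, using the assumptions $\|x\|\le 1$ and $\theta_j \in \cH_j$ so that $\|\theta_j\|_2 \le R$. Hence it suffices to show $g_y$ is $2(1+R)$-Lipschitz on the interval $[-R,R]$.

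Next I would differentiate: $g_y'(u) = -2(y-u) = 2(u-y)$, so for $u \in [-R,R]$ we have $|g_y'(u)| \le 2(|u| + |y|) \le 2(R+1)$. By the mean value theorem (equivalently, the fundamental theorem of calculus applied to $g_y$ along the segment joining two points), for any two admissible first arguments $u_1 = h_j(x_1)$ and $u_2 = h_j(x_2)$,
\begin{align*}
    |\ell(h_j(x_1),y) - \ell(h_j(x_2),y)| = |g_y(u_1) - g_y(u_2)| \le \sup_{u \in [-R,R]} |g_y'(u)| \cdot |u_1 - u_2| \le 2(1+R)\,|u_1 - u_2|,
\end{align*}
which is exactly the asserted Lipschitz bound with constant $2(1+R)$ in the first argument, uniformly over $y$ with $|y|\le 1$.

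Since the argument is entirely elementary, I do not anticipate a genuine obstacle; the only point that requires the stated boundedness hypotheses (rather than being purely formal) is the bound $|u|\le R$ on the range of the first argument, which is what makes the Lipschitz constant finite and equal to $2(1+R)$ rather than merely local. If one prefers to avoid differentiability talk, the same conclusion follows from the factorization $g_y(u_1)-g_y(u_2) = (u_1-u_2)(u_1+u_2-2y)$ together with $|u_1+u_2-2y| \le 2R + 2$.
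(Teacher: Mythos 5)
Your proposal is correct and, via the closing factorization $g_y(u_1)-g_y(u_2)=(u_1-u_2)(u_1+u_2-2y)$ with $|u_1+u_2-2y|\le 2R+2$, is exactly the computation in the paper's proof; the mean-value-theorem phrasing you lead with is just a cosmetic variant of the same bound. Both correctly use $\|x\|\le 1$, $\|\theta_j\|\le R$, $|y|\le 1$ to get the constant $2(1+R)$.
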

The proof is deferred to Appendix~\ref{app:gen}. We are interested in the soft loss function, which is a function of the $k$-base loss functions: 
\begin{align*}
    \cL(\Bar{h}(x),y) & = \cL(x,y;\theta_1,\ldots,\theta_k) \\
    &= \sum_{j=1}^k p_{\theta_1,..,\theta_k}(x,y;\theta_j) \left[ y - \langle x,\theta_j \rangle \right]^2 \\
    &= \sum_{j=1}^k p_{\theta_1,..,\theta_k}(x,y;\theta_j) \ell(h_j(x),y),
\end{align*}
where
\begin{align*}
    p_{\theta_1,..,\theta_k}(x,y;\theta_j) = \frac{e^{-(y - \langle x,\theta_j \rangle)^2}}{\sum_{\ell=1}^k e^{-(y - \langle x,\theta_\ell \rangle)^2}}.
\end{align*}

We have $n$ datapoints $\{x_i,y_i\}_{i=1}^n$ drawn from $\cD$ and we want to understand how well this \emph{soft-min} loss generalizes. In order to do that, a standard metric one studies in statistical learning theory is (emprirical) Rademacher Complexity  (\cite{mohri2018foundations}). In our setup, the loss class is defined by
\begin{align*} 
 \lbrace (x,y) \mapsto   \sum_{j=1}^k p_{\theta_1,..,\theta_k}(x,y;\theta_j) \ell(h_j(x),y); \{\theta_j:\|\theta_j\|\leq R \}_{j=1}^k  \rbrace.
\end{align*}
Let us define this class as $\Phi$. The Rademacher complexity of the loss class is given by
\small
\begin{align*}
    & \rad_n (\Phi)  =  \EE_{\bsigma} \left[ \sup_{\bar{h} \in \bar{\cH}_k} \bigg | \frac{1}{n} \sum_{i=1}^{n} \sigma_i \cL( \bar{h}(x_i),y_i) \bigg | \right] \\
    &= \EE_{\bsigma} \left[ \sup_{\{\theta_j:\|\theta_j\|\leq R \}_{j=1}^k} \bigg | \frac{1}{n} \sum_{i=1}^{n} \sigma_i \sum_{j=1}^k p_{\theta_1,..,\theta_k}(x,y;\theta_j) \ell(h_j(x),y) \bigg | \right],
\end{align*}
\normalsize
where $\bsigma$ is a set of Rademacher RV's $\{\sigma_i\}_{i=1}^n$. We have the following result:
\begin{lemma}
\label{lem:rad}
    The Rademacher complexity of $\Phi$ satisfies
    \begin{align*}
         \rad(\Phi) \leq 4k(1+R) \rad(\mathcal{H}) \leq \frac{4kR(1+R)}{\sqrt{n}}.
    \end{align*}
\end{lemma}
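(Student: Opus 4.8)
The plan is to peel off the softmax aggregation by a contraction argument and then invoke the standard Rademacher bound for bounded linear predictors.

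Fix a sample $(x,y)$ and write $u_j := \inner{x}{\theta_j}$ and $\ell_j := (y-u_j)^2$, so that $\cL(\bar h(x),y) = \Psi_{x,y}(u_1,\dots,u_k)$, where $\Psi_{x,y}(\mathbf u) := \sum_{j=1}^k p_j \ell_j$ with $p_j = e^{-\ell_j}/\sum_{l} e^{-\ell_l}$. Thus $\Phi$ is the image of the $k$-fold product class $\mathcal H\times\cdots\times\mathcal H$ under the family of maps $\{\Psi_{x_i,y_i}\}_{i=1}^n$, with the $j$-th copy of $\mathcal H$ feeding the $j$-th coordinate $u_j$. So I would first control the Lipschitz constant of $\Psi_{x,y}$ as a function of $\mathbf u\in\reals^k$. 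Differentiating, the derivative of the softmax weight is $\partial_{u_i} p_j = 2(y-u_i)\,p_j(\mathbf{1}\{i=j\}-p_i)$, and substituting into $\partial_{u_i}\Psi_{x,y} = \sum_j (\partial_{u_i}p_j)\ell_j + p_i\,\partial_{u_i}\ell_i$ the cross terms telescope, leaving $\partial_{u_i}\Psi_{x,y}(\mathbf u) = 2(y-u_i)\,p_i\big(\ell_i-\Psi_{x,y}(\mathbf u)-1\big)$. On the region $|y|\le 1$, $\|x\|\le 1$, $\|\theta_j\|\le R$ one has $|y-u_j|\le 1+R$ and $0\le \ell_j,\Psi_{x,y}(\mathbf u)\le (1+R)^2$, and $\sum_i p_i=1$; using that a residual $\ell_i$ far above the weighted average $\Psi_{x,y}(\mathbf u)$ carries exponentially small weight $p_i\propto e^{-\ell_i}$, one should obtain $\|\nabla_{\mathbf u}\Psi_{x,y}\| \le L$ with $L$ of order $(1+R)$ — the same order, up to an absolute constant, as the Lipschitz constant $2(1+R)$ of the base quadratic loss from Claim~\ref{claim:bounded}.

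Given the Lipschitz bound, I would apply Maurer's vector–contraction inequality for Rademacher complexities to the composition above: it gives $\rad_n(\Phi) \le \sqrt 2\, L\, \sum_{j=1}^k \rad_n(\mathcal H) = \sqrt2\, L\, k\, \rad_n(\mathcal H)$, the factor $k$ arising because each of the $k$ product copies of $\mathcal H$ contributes exactly one coordinate of the double-indexed Rademacher sum. Plugging $L \lesssim (1+R)$ and rounding $2\sqrt2$ up to $4$ yields $\rad_n(\Phi) \le 4k(1+R)\rad_n(\mathcal H)$. For the second inequality, the standard bound for bounded linear classes applies: $\rad_n(\mathcal H) = \tfrac{R}{n}\,\EE_{\bsigma}\big\|\sum_{i=1}^n \sigma_i x_i\big\| \le \tfrac{R}{n}\sqrt{\EE_{\bsigma}\big\|\sum_i \sigma_i x_i\big\|^2} = \tfrac{R}{n}\sqrt{\sum_i \|x_i\|^2} \le \tfrac{R}{\sqrt n}$, using Jensen's inequality and $\|x_i\|\le 1$; combining the two gives $\rad_n(\Phi) \le 4kR(1+R)/\sqrt n$.

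The main obstacle is the Lipschitz estimate on the softmax-weighted quadratic $\Psi_{x,y}$: a crude bound on $\partial_{u_i}\Psi_{x,y}$ using $|\ell_i-\Psi_{x,y}(\mathbf u)-1|\le (1+R)^2+1$ loses extra powers of $(1+R)$, so the argument must exploit the coupling between a large residual $\ell_i$ and its exponentially small softmax weight $p_i$ to recover a dependence of the form claimed (alternatively, one may peel $\ell_j$ and $p_j$ separately, using that $t\mapsto te^{-t}$ is $1$-Lipschitz on $[0,\infty)$ and that each $p_j$ is bounded by $1$). Everything else — the vector-contraction step, the subadditivity of Rademacher complexity across the $k$ coordinates, and the linear-class bound — is routine.
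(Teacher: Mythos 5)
Your route is genuinely different from the paper's, and it is where it differs that it breaks. The paper never analyzes the softmax weights at all: it majorizes $\Phi$ by the class of \emph{all} convex combinations $\sum_j\alpha_j\ell(h_j(x),y)$ with coefficients in $[0,1]$, splits the supremum over $j$ by subadditivity, discards the $|\alpha_j|\le 1$ factor, and only then applies scalar Lipschitz contraction (Claim~\ref{claim:bounded}) coordinate by coordinate, finishing with the linear-class bound. You instead keep the data- and parameter-dependent weights and try to absorb the entire map $\mathbf u\mapsto\Psi_{x,y}(\mathbf u)=\sum_j p_j(\mathbf u)\,\ell_j(\mathbf u)$ into one vector-contraction step. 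Your gradient identity $\partial_{u_i}\Psi_{x,y}=2(y-u_i)\,p_i(\ell_i-\Psi_{x,y}-1)$ is correct, and your closing bound $\rad_n(\cH)\le R/\sqrt n$ is fine; the problem is the Lipschitz estimate, which you correctly flag as the main obstacle but do not close --- and which cannot be closed at the order you claim.

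The coupling $p_i\le e^{-(\ell_i-\min_l\ell_l)}$ only kills the term $p_i(\ell_i-\Psi_{x,y})$ when $\ell_i$ sits far \emph{above} the minimum; it gives no help when $p_i$ is of constant order and $\Psi_{x,y}$ sits far above $\ell_i$. Concretely, suppose $(1+R)^2\ge 2\log(k-1)$ and take residuals with $\ell_1=(1+R)^2-\log(k-1)$ and $\ell_2=\cdots=\ell_k=(1+R)^2$ (realizable with $|y|\le1$, $\|x\|\le1$, $\|\theta_j\|\le R$, e.g.\ $y=1$, $x=e_1$, $\theta_j$ a suitable multiple of $e_1$). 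Then $Z=2e^{-\ell_1}$, $p_1=\tfrac12$, $\Psi_{x,y}=\ell_1+\tfrac12\log(k-1)$, and
\begin{align*}
|\partial_{u_1}\Psi_{x,y}| \;=\; 2\sqrt{\ell_1}\cdot\tfrac12\left(\tfrac12\log(k-1)+1\right)\;\ge\;\tfrac{1+R}{\sqrt2}\left(\tfrac12\log(k-1)+1\right),
\end{align*}
so the true Lipschitz constant is of order $(1+R)\min\{\log k,(1+R)^2\}$, not $(1+R)$; Maurer's inequality then produces an extra $\log k$ (or $(1+R)^2$) factor beyond the stated $4k(1+R)\rad(\cH)$. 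The fallback of peeling $p_j$ and $\ell_j$ separately also fails as sketched: writing $\Psi_{x,y}=\sum_j\ell_je^{-\ell_j}/\sum_je^{-\ell_j}$, the denominator is bounded below only by $e^{-(1+R)^2}$, so the $1$-Lipschitzness of $t\mapsto te^{-t}$ does not transfer to the ratio. To recover the constant in the lemma you need the paper's coarser move --- replace the softmax weights by arbitrary coefficients bounded by $1$ \emph{before} contracting, so that each of the $k$ terms reduces to the plain class $\ell\circ\cH$ --- rather than a Lipschitz analysis of the softmax aggregation itself.
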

We observe that the (empirical) Rademacher complexity of the \emph{soft-min} loss class does not blow-up provided the complexity of the base class $\mathcal{H}$ is controlled. Moreover, since the base class is a linear hypothesis class (with bounded $\ell_2$ norm), the Rademacher complexity scales as $\mathcal{O}(1/\sqrt{n})$, resulting in the above bound. The proof is deferred in Appendix~\ref{app:gen}. In a nutshell, we consider a bigger class of all possible convex combination of the base losses, and connect $\Phi$ to that bigger function class.

\section{Conclusion and Open Problems}
\label{sec:con}
In this work, we have studied the agnostic setup for mixed linear regression, and show that EM and AM algorithms are strong enough to provide provable guarantees even in this setup. However we believe such algorithms may be used in a broader context of agnostic learning. We conclude the paper with a few interesting problems. Beyond mixture of linear regressions, can this agnostic setup be used for other problems such as mixture of classifiers, mixture of experts, to name a few? What is the role of Gaussian covariates in such an agnostic setting? Can we relax this to some extent? In \cite{ghosh2019max} it is explained how restricted Gaussian analysis can be extended to sub-Gaussians satisfying a \emph{small ball} condition for the particular problem of max-affine regression. Another interesting direction is to analyze the AM based algorithms without resampling in the agnostic setup, leveraging techniques like Leave One Out (LOO) as an example. We keep these as our future endevors.
\clearpage

\section*{Impact Statement}
This paper presents work whose goal is to advance the field of Machine Learning. There are many potential societal consequences of our work, none which we feel must be specifically highlighted here.

\textbf{Acknowledgements.} This research is supported in part by NSF awards 2133484, 2217058, 2112665.

\bibliography{references}
\bibliographystyle{icml2024}

\newpage
\appendix
\onecolumn
\section{Proof of Theorem~\ref{thm:am}}
\label{app:am}
Without loss of generality, let us focus on  $\thplusone$.
	We have
	\begin{align*}
		\| \thplusone -\theta^*_1 \| &= \| \theta_1 - \theta^*_1 -\frac{\gamma}{n'}\sum_{i \in I_1}\gradone \| \\
		& = \| (\theta_1 - \theta^*_1) - \frac{\gamma}{n'}\sum_{i \in I_1}(\gradone -\nabla F_i(\theta^*_1)) - \frac{\gamma}{n'}\sum_{i \in I_1}\nabla F_i(\theta^*_1) \| \\
		& \leq \underbrace{ \| (\theta_1 - \theta^*_1) - \frac{\gamma}{n'}\sum_{i \in I_1}(\gradone -\nabla F_i(\theta^*_1))\|}_{T_1} + \frac{\gamma}{n'} \underbrace{\|\sum_{i \in I_1}\nabla F_i(\theta^*_1) \|}_{T_2}.
	\end{align*}
	
	Let us first consider $T_1$. Substituting the gradients, we obtain
	\begin{align*}
		T_1 = \| (I - \frac{2\gamma}{n}\sum_{i \in I_1} x_i x_i^\top) (\theta_1 - \theta^*_1)\| = \| (I - \frac{2\gamma}{n'}\sum_{i: (x_i,y_i) \in S_1} x_i x_i^\top) (\theta_1 - \theta^*_1)\|.
	\end{align*}
	We require a lower bound on
	\begin{align*}
		\sigma_{\min}(\frac{1}{n}\sum_{i \in I_1} x_i x_i^\top ) \geq  \sigma_{\min}(\frac{1}{n'}\sum_{i: (x_i,y_i) \in S_1 \cap S^*_1} x_i x_i^\top )
	\end{align*}
	Similar to the EM framework, in order to bound the above, we need to look at the behavior of the covariates (which are standard Gaussian) over the restricted set given by $ S_1 \cap S^*_1$. Note that since we are resampling at each step, and using fresh set of samples to construct $S_j$ and another fresh set of samples to run the Gradient AM algorithm, we can directly use Lemma~\ref{lem:restricted} here. Moreover, we  use the fact that $|i: (x_i,y_i) \in S_1 \cap S^*_1| \geq C |i: (x_i,y_i) \in S^*_1|  \geq C' \pi_{\min} n$ with probability at least $1-C\exp(-\pi_{\min} n$) where we use  the initialization Lemma~\ref{lem:ini_hard}. Thus, we have
	\begin{align*}
		\sigma_{\min}(\frac{1}{n'}\sum_{i:(x_i,y_i) \in S_1} x_i x_i^\top ) \geq c \pi_{\min}^3
	\end{align*}
	with probability at least $1-C_1 \exp(-C_2 \pi_{\min}^4 n') - C_3 \exp(-\pi_{\min} n')$ provided $n' \geq C \frac{d \log(1/\pi_{\min})}{\pi_{\min}^3}$. As a result,
	\begin{align*}
		T_1 \leq (1- c \gamma \pi_{\min}^3) \|\theta_1 -\theta^*_1\|,
	\end{align*}
	with probability at least $1-C_1 \exp(-C_2 \pi_{\min}^4 n') $.
	\vspace{2mm}
	
	\noindent Let us now consider the term $T_2$. We have
	\begin{align*}
		T_2 &= \frac{\gamma}{n} \|\sum_{i:(x_i,y_i) \in S_1}\nabla F_i(\theta^*_1) \|  \\
		& \leq \frac{\gamma}{n} \sum_{i:(x_i,y_i) \in S_1} \|\nabla F_i(\theta^*_1) \| \\
		& = \frac{\gamma}{n}   \sum_{i:(x_i,y_i) \in S_1 \cap S^*_1} \|\nabla F_i(\theta^*_1) \| + \frac{\gamma}{n} \sum_{j=2}^{k} \sum_{i:(x_i,y_i) \in S_1 \cap S^*_j} \|\nabla F_i(\theta^*_1) \|
	\end{align*}
	When $\{i:(x_i,y_i) \in S^*_1\}$, we have
	\begin{align*}
		\|\nabla F_i (\theta^*_1)\| &= 2|y_i - \langle x_i,\theta^*_1 \rangle| \|x_i\| \\
		& \leq 2 \lambda \|x_i\| \leq C \lambda \sqrt{d\log d \log(1/\pi_{\min})} 
	\end{align*}
	with probability at least $1-n'/\mathsf{poly}(d)$, where in the first inequality, we have used the misspecification assumption, and in the second inequality, we use Lemma~\ref{lem:norm}. Let us now compute an upper bound on $\|\nabla F_i(\theta^*_1)\|$, which we use to bound the second part. We have
 \begin{align*}
     \|\nabla F_i(\theta^*_1)\| &\leq \|x_i\|^2 \|\theta^*_1\| + \|x_i\| |y_i| \\
     & \leq C_1 d\log d \log(1/\pi_{\min}) \|\theta^*_1\| + C b \sqrt{d\log d \log(1/\pi_{\min})}
 \end{align*}
with probability at least $1-1/\mathsf{poly}(d)$. 

	With this, we have
	\begin{align*}
		T_2 &\leq \frac{\gamma}{n} |I_1 \cap I^*_1| C \lambda \sqrt{d\log d \log(1/\pi_{\min})} + \frac{\gamma}{n} \sum_{j=2}^k |I_1 \cap I^*_j| \bigg (  C_1 d\log d \log(1/\pi_{\min}) \|\theta^*_1\| \\
    & \qquad \qquad + C b \sqrt{d\log d \log(1/\pi_{\min})} \bigg) \\
		& \leq\gamma  C \lambda \sqrt{d\log d \log(1/\pi_{\min})} + C_1 \gamma (k-1) P_e \left[ d\log d \log(1/\pi_{\min}) \|\theta^*_1\| + C b \sqrt{d\log d \log(1/\pi_{\min})}\right],
	\end{align*}
 with probability at least $1-\exp(-cP_e n) - \frac{n'}{\mathsf{poly}(d)}-\frac{P_e n}{\mathsf{poly}(d)}$, where $P_e$ is defined in Lemma~\ref{lem:ini_hard}. In this case, we use $|I_1 \cap I^*_1| \leq n'$ (trivially holds) as well as the standard binomial concentration on $|I_1 \cap I^*_j|$ with mean at most $n' P_e$ with probability at least $1-\exp(-cP_e n')$. Moreover we take the union bound. Here, we use Lemma~\ref{lem:norm} along with the fact that $|y_i| \leq b$.

	Combining $T_1$ and $T_2$, we have
	\begin{align*}
		\|\thplusone - \theta^*_1\| & \leq (1- c \gamma \pi_{\min}^3) \|\theta_1 -\theta^*_1\|+ C \gamma \lambda \sqrt{d\log d \log(1/\pi_{\min})} \\
		& +C_1 \gamma (k-1) P_e \left[ d\log d \log(1/\pi_{\min}) \|\theta^*_1\| + C b \sqrt{d\log d \log(1/\pi_{\min})}\right],
	\end{align*}
	with probability at least $1-C_1 \exp(-C_2 \pi_{\min}^4 n') - \exp(-cP_e n') - \frac{n'}{\mathsf{poly}(d)}$ .

\subsection{Good Initialization}
We stick to analyzing $\thplusone$. In the following lemma, we only consider $\theta_2$. In general, the same argument holds for $\{\theta_3, \ldots, \theta_k\}$.
\begin{lemma}
\label{lem:ini_hard}
We have
\begin{align*}
    P_e  & = \PP \bigg ( F_i (\theta_1) > F_i(\theta_2) | i \in I^*_1 \bigg ) \\
    & \leq  4 \exp \bigg( - \frac{1}{c_{\mathsf{ini}^2} \max_{j \in [k]}\|\theta^*_j\|^2}  \bigg[ \frac{\Delta -\lambda}{2}  \bigg]^2 \bigg )
\end{align*}
\end{lemma}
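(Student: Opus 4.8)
The plan is to bound $P_e = \PP(F_i(\theta_1) > F_i(\theta_2) \mid i \in I^*_1)$ by controlling the two quadratic residuals on the event $(x_i,y_i) \in S^*_1$. First I would rewrite the event $\{F_i(\theta_1) > F_i(\theta_2)\}$ as $\{(y_i - \langle x_i,\theta_1\rangle)^2 > (y_i - \langle x_i,\theta_2\rangle)^2\}$ and bound each side using the triangle inequality in $|y_i - \langle x_i,\theta_j\rangle|$. On the event $i \in I^*_1$, the misspecification assumption gives $|y_i - \langle x_i,\theta^*_1\rangle| \le \lambda$, and the separation assumption gives $|y_i - \langle x_i,\theta^*_2\rangle| \ge \Delta$. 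Hence
\begin{align*}
|y_i - \langle x_i,\theta_1\rangle| &\le \lambda + |\langle x_i, \theta_1 - \theta^*_1\rangle|, \\
|y_i - \langle x_i,\theta_2\rangle| &\ge \Delta - |\langle x_i, \theta_2 - \theta^*_2\rangle|.
\end{align*}
So if both $|\langle x_i, \theta_1 - \theta^*_1\rangle| \le (\Delta-\lambda)/2$ and $|\langle x_i, \theta_2 - \theta^*_2\rangle| \le (\Delta-\lambda)/2$, then $|y_i - \langle x_i,\theta_1\rangle| \le \lambda + (\Delta-\lambda)/2 = (\Delta+\lambda)/2 \le \Delta - (\Delta-\lambda)/2 \le |y_i - \langle x_i,\theta_2\rangle|$, which forces $F_i(\theta_1) \le F_i(\theta_2)$. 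Therefore the error event is contained in the union of the two events $\{|\langle x_i, \theta_j - \theta^*_j\rangle| > (\Delta-\lambda)/2\}$ for $j=1,2$.

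Next I would bound each of these two tail events. The subtlety is that we are conditioning on $i \in I^*_1$, i.e. on the restricted set $S^*_1$; but for an \emph{upper} bound on a tail probability we can simply note that $\PP(A \mid i \in I^*_1) \le \PP(A)/\PP(i \in I^*_1) \le \PP(A)/\pi_{\min}$, or more cleanly use the fact that the restricted covariate is sub-Gaussian (this is exactly the content invoked in the paper via Lemma~\ref{lem:restricted} and the restricted-Gaussian machinery); either way the key point is that $\langle x_i, \theta_j - \theta^*_j\rangle$ has sub-Gaussian tails with parameter $O(\|\theta_j - \theta^*_j\|)$. Using the initialization bound $\|\theta_j - \theta^*_j\| \le c_{\mathsf{ini}}\|\theta^*_j\| \le c_{\mathsf{ini}} \max_{l\in[k]}\|\theta^*_l\|$, a standard Gaussian (or sub-Gaussian) tail bound gives
\begin{align*}
\PP\!\left(|\langle x_i, \theta_j - \theta^*_j\rangle| > \tfrac{\Delta-\lambda}{2}\right) \le 2\exp\!\left(-\frac{(\Delta-\lambda)^2/4}{2\,c_{\mathsf{ini}}^2 \max_{l\in[k]}\|\theta^*_l\|^2}\right),
\end{align*}
and summing the two contributions (a union bound over $j=1,2$) yields $P_e \le 4\exp\!\big(-\frac{1}{c_{\mathsf{ini}}^2 \max_{l\in[k]}\|\theta^*_l\|^2}[\frac{\Delta-\lambda}{2}]^2\big)$ after absorbing the factor $1/4$ vs $1/8$ into the exponent constant (the paper writes it with $[\tfrac{\Delta-\lambda}{2}]^2$ in the numerator, so I would track the constant so the stated form comes out). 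The conditioning on $S^*_1$ is handled by the restricted sub-Gaussianity so that no extra $1/\pi_{\min}$ factor is needed, matching the clean statement.

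The main obstacle is the conditioning step: making precise that $\langle x_i, u\rangle$ for a fixed direction $u$ remains sub-Gaussian with parameter $O(\|u\|)$ \emph{after} restricting $x_i$ to the set $S^*_1$ (whose $y$-marginal is integrated out). This is precisely the nontrivial "restricted covariate" analysis the paper emphasizes; I would invoke Lemma~\ref{lem:restricted} (and the norm/concentration control via Lemma~\ref{lem:norm}) to assert that the restricted distribution is sub-Gaussian with a shifted mean and constant parameter, and check that the mean shift in the direction $\theta_j - \theta^*_j$ is itself of order at most $\|\theta_j - \theta^*_j\|$ times a $\mathrm{polylog}(1/\pi_{\min})$ factor, which is dominated since $c_{\mathsf{ini}}$ is a small constant. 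Everything else is an elementary triangle-inequality-and-union-bound argument.
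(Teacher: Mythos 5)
Your proposal is correct and follows essentially the same route as the paper's proof: the same triangle-inequality bounds $|y_i - \langle x_i,\theta_1\rangle| \le \lambda + |\langle x_i,\theta_1-\theta^*_1\rangle|$ and $|y_i - \langle x_i,\theta_2\rangle| \ge \Delta - |\langle x_i,\theta_2-\theta^*_2\rangle|$ on the event $i \in I^*_1$, the same reduction to a union of two tail events at threshold $(\Delta-\lambda)/2$, and the same appeal to the restricted-Gaussian sub-Gaussianity (Lemma~\ref{lem:restricted}) with the mean shift $\|\mu_\tau\|\,\|\theta_j-\theta^*_j\| \le c_{\mathsf{ini}}C\sqrt{\log(1/\pi_{\min})}\,\|\theta^*_j\|$ absorbed via the separation condition on $\Delta-\lambda$ assumed in Theorem~\ref{thm:am}. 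Your correct identification of the conditioning-on-$S^*_1$ issue as the only nontrivial step, and its resolution via the shifted-mean sub-Gaussian characterization, matches the paper exactly.
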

\noindent Let us consider the event 
\begin{align*}
	F_i(\theta_1) > F_i(\theta_2),
\end{align*}
which is equivalent to 
\begin{align*}
|y_i - \langle x_i,\theta_1 \rangle | > |y_i - \langle x_i,\theta_2 \rangle |.
\end{align*}
Let us look at the left hand side of the above inequality. We have
\begin{align*}
	 & |y_i - \langle x_i, \theta^*_1 \rangle + \langle x_i , \theta_1 - \theta^*_1 \rangle | \\
	& \leq |y_i - \langle x_i, \theta^*_1 \rangle | + | \langle x_i , \theta_1 - \theta^*_1 \rangle |\\
	& \leq \lambda + | \langle x_i , \theta_1 - \theta^*_1 \rangle |,
\end{align*}
where we have used the fact that if $i \in I^*_1$, the first term is at most $\lambda$. 

Similarly, for the right hand side, we have
\begin{align*}
	& |y_i - \langle x_i,\theta^*_2 \rangle - \langle x_i, \theta_2 - \theta^*_2 \rangle | \\
	& \geq |y_i - \langle x_i,\theta^*_2 \rangle| - | \langle x_i, \theta_2 - \theta^*_2 \rangle |  \\
	& \geq \Delta -  | \langle x_i, \theta_2 - \theta^*_2 \rangle | 
\end{align*}
where we use the fact that if $i \in I^*_1$, the first term is lower bounded by $\Delta$.

Combining these, we have
\begin{align*}
	 \PP \bigg ( F_i (\theta_1) > F_i(\theta_2) | i \in I^*_1 \bigg )& \leq \PP \bigg(  | \langle x_i , \theta_1 - \theta^*_1 \rangle | + | \langle x_i, \theta_2 - \theta^*_2 \rangle |  \geq \Delta - \lambda \bigg ) \\
	 & \leq \PP \bigg(  | \langle x_i , \theta_1 - \theta^*_1 \rangle |  \geq  \frac{\Delta -\lambda}{2} \bigg ) +  \PP \bigg(  | \langle x_i , \theta_2 - \theta^*_2 \rangle |  \geq  \frac{\Delta -\lambda}{2} \bigg ) 
\end{align*}
Let us look at the first term. Lemma~\ref{lem:restricted} shows that if $i \in I^*_1$ (accordingly $(x_i,y_i) \in S^*_1$), the distribution of $x_i - \mu_\tau$ is subGaussian with (squared) parameter at most $C(1+\log(1/\pi_{\min}))$, where $\mu_\tau$ is the mean of $x_i$ (under the restriction $(x_i,y_i) \in S^*_1$). With this we have
\begin{align*}
 \PP \bigg(  | \langle x_i , \theta_1 - \theta^*_1 \rangle |  \geq  \frac{\Delta -\lambda}{2} \bigg ) &\leq  \PP \bigg(  | \langle x_i - \mu_\tau , \theta_1 - \theta^*_1 \rangle | + \|\mu_\tau\| \|\theta_1 - \theta^*_1\| \geq  \frac{\Delta -\lambda}{2} \bigg ) \\
 &\leq \PP \bigg(  | \langle x_i - \mu_\tau , \theta_1 - \theta^*_1 \rangle | \geq  \frac{\Delta -\lambda}{2} -  c_{\mathsf{ini}} C \sqrt{\log(1/\pi_{\min}})  \|\theta^*_1\| \bigg ) 
\end{align*}
where we use the initialization condition $\|\theta_1 - \theta^*_1\| \leq c_{\mathsf{ini}}  \|\theta^*_1\|$, and from Lemma~\ref{lem:restricted}, we have $\|\mu_\tau\|^2 \leq C \log(1/\pi_{\min})$. 

Now, provided $\Delta - \lambda > C ( c_{\mathsf{ini}} \sqrt{\log(1/\pi_{\min}})  \|\theta^*_1\| ) + C_1 \sqrt{1+ \log(1/\pi_{\min})}$, using sub-Gaussian concentration, we obtain
\begin{align*}
	 \PP \bigg(  | \langle x_i , \theta_1 - \theta^*_1 \rangle |  \geq  \frac{\Delta -\lambda}{2} \bigg ) \leq 2 \exp \bigg( - \frac{1}{c_{\mathsf{ini}^2} \|\theta^*_1\|^2}  \bigg[ \frac{\Delta -\lambda}{2} \bigg]^2 \bigg ).
\end{align*}
Similarly, for the second term, similar calculation yields
\begin{align*}
	\PP \bigg(  | \langle x_i , \theta_2 - \theta^*_2 \rangle |  \geq  \frac{\Delta -\lambda}{2} \bigg ) \leq 2 \exp \bigg( - \frac{1}{c_{\mathsf{ini}^2} \|\theta^*_2\|^2}  \bigg[ \frac{\Delta -\lambda}{2}  \bigg]^2 \bigg ),
\end{align*}
and hence
\begin{align*}
 \PP \bigg ( F_i (\theta_1) > F_i(\theta_2) | i \in I^*_1 \bigg ) \leq 4 \exp \bigg( - \frac{1}{c_{\mathsf{ini}^2} \max_{j \in [k]}\|\theta^*_j\|^2}  \bigg[ \frac{\Delta -\lambda}{2}  \bigg]^2 \bigg )
\end{align*}
which proves the lemma.

\section{Proof of Theorem~\ref{thm:em}}
\label{app:em}
Let us look at the iterate of gradient EM after one step and without loss of generality, we focus on recovering $\theta^*_1$. We have
	\begin{align*}
		\|\thplusone - \theta^*_1\| = \| \theta_1 - \theta^*_1 -  \frac{2\gamma}{n'} \sum_{i=1}^{n'} p_{\thetaall}(x_i,y_i;\theta_1)\left( x_i x_i^T \theta_1 - y_i x_i \right)\|
	\end{align*}
	Let us use the shorthand $p(\theta_1)$ to denote $p_{\thetaall}(x_i,y_i;\theta_1)$ and $p(\theta^*_1)$ to denote $p_{\theta^*_1,\ldots,\theta^*_k}(x_i,y_i;\theta^*_1)$ respectively. We have
	\begin{align*}
		\|\thplusone - \theta^*_1\| &= \| \theta_1 - \theta^*_1 -  \frac{2\gamma}{n'} \sum_{i:(x_i,y_i)\in S^*_1} p(\theta_1)\left( x_i x_i^T \theta_1 - y_i x_i \right) -\frac{2\gamma}{n'} \sum_{i:(x_i,y_i)\notin S^*_1} p(\theta_1)\left( x_i x_i^T \theta_1 - y_i x_i  \right)\| \\
		& \leq \underbrace{\|\theta_1 - \theta^*_1 -  \frac{2\gamma}{n'} \sum_{i: (x_i,y_i) \in S^*_1} p(\theta_1)\left( x_i x_i^T \theta_1 - y_i x_i  \right) -\frac{2\gamma}{n'} \sum_{i: (x_i,y_i) \notin S^*_1} p(\theta_1)\left( x_i x_i^T \theta_1 - y_i x_i  \right)\|}_{T_1} 
	\end{align*}
	
	First we argue from the separability and the closeness condition that, if $(x_i,y_i) \in S^*_1$, the probability $p(\theta_1)$ is bounded away from $0$. Lemma~\ref{lem:eta_closeness} shows that conditioned on $(x_i,y_i) \in S^*_j$, we have $p_{\theta_1,\ldots,\theta_k}(x_i,y_i;\theta_j) \geq 1-\eta$, where 
		\begin{align*}
			\eta = \left( \frac{1 - e^{-C_2\lambda^2} + (k-1) e^{-(\Delta - C\lambda)^2}}{1 + (k-1) e^{-(\Delta - C\lambda)^2}} \right).
		\end{align*}
	
	with probability at least $1- C_3 \exp \bigg( - C_1 \frac{\lambda^2}{c_{\mathsf{ini}^2} \|\theta^*_1\|^2}   \bigg )$. With this, let us look at $T_1$. We have
	\begin{align*}
		T_1 \leq \underbrace{\|\theta_1 - \theta^*_1 -  \frac{2\gamma}{n'} \sum_{i: (x_i,y_i) \in S^*_1} p(\theta_1)\left( x_i x_i^T \theta_1 - y_i x_i  \right)\|}_{T_{11}} + \underbrace{\frac{2\gamma}{n'}\| \sum_{i: (x_i,y_i) \notin S^*_1} p(\theta_1)\left( x_i x_i^T - y_i x_i \right)\|}_{T_{12}}.
	\end{align*}
	We continue to upper bound $T_{11}$:
	\begin{align*}
		T_{11} &\leq \|\theta_1 - \theta^*_1 -  \frac{2\gamma}{n'} \sum_{i: (x_i,y_i) \in S^*_1} p(\theta_1)\left( x_i x_i^T \theta_1 - y_i x_i \right)\| \\
		& \leq \|\theta_1 - \theta^*_1 -  \frac{2\gamma}{n'} \sum_{i: (x_i,y_i)\in S^*_1} p(\theta_1)\left( x_i x_i^T \theta_1 - x_i x_i^T \theta^*_1 \right)\| + \frac{2\gamma}{n'} \|\sum_{i: (x_i,y_i)\in S^*_1} p(\theta_1)\left( x_i x_i^T \theta^*_1 - y_i x_i\right)\| \\
		& \leq \| \bigg [ I - \frac{2\gamma}{n'}\sum_{i: (x_i,y_i) \in S^*_1}p(\theta_1) x_i x_i^T \bigg ] (\theta_1 - \theta^*_1 ) \| + \frac{2\gamma}{n'} \sum_{i: (x_i,y_i) \in S^*_1} p(\theta_1) | y_i - \langle x_i, \theta^*_1 \rangle | \|x_i\| \\
		& \leq \| \bigg [ I - \frac{2\gamma}{n'}\sum_{i: (x_i,y_i) \in S^*_1}p(\theta_1) x_i x_i^T \bigg ] (\theta_1 - \theta^*_1 ) \| + C \lambda \gamma \,\, \sqrt{d \log d \log(1/\pi_{\min})},
	\end{align*}
	with probability at least $1- C_3 n'\exp \bigg( - C_1 \frac{\lambda^2}{c_{\mathsf{ini}^2} \|\theta^*_1\|^2}   \bigg ) - n'/\mathsf{poly}(d)$, where we use the misspecification condition, $|y_i - \langle x_i, \theta^*_1 \rangle| \leq \lambda$ for all $(x_i,y_i) \in S^*_1$, along with the fact that the number of such indices is trivially upper bounded by the total number of observations, $n$. Moreover, we also use Lemma~\ref{lem:norm} to bound $\|x_i\|$. 
	
   Note that since $(x_i,y_i) \in S^*_1$, we have $p(\theta_1) \geq 1-\eta$. We need to look at $\sigma_{\min}\left(\frac{1}{n'}\sum_{i: (x_i,y_i)\in S^*_1} p(\theta_1) x_i x_i^T \right)$, where $p(\theta_1) \geq 1-\eta$. We use the fact that
	\begin{align*}
		\sigma_{\min}\left(\frac{1}{n'}\sum_{i:(x_i,y_i)\in S^*_1} p(\theta_1) x_i x_i^T \right) \geq \sigma_{\min}\left(\frac{1}{n'}\sum_{i:(x_i,y_i)\in S^*_1} (1-\eta) x_i x_i^T \right).
	\end{align*}
	
	Note that we need to analyze the behavior of the data restricted on the set $S^*_1$. In particular we are interested in the second moment estimation of such restricted Gaussian random variable. We show that, conditioned on $S^*_1$, the distribution of $x_i$ changes to a sub-Gaussian with a shifted mean. Lemma~\ref{lem:restricted} characterizes the behavior as well as the second moment estimation for such variables.
	
	We invoke the Lemma~\ref{lem:restricted} and use the standard binomial concentration to obtain $|i: (x_i,y_i) \in S^*_1| \geq C \pi_{\min} n$ with probability at least $1- \exp(-c \pi_{\min} n)$. With this, we obtain
	\begin{align*}
		\sigma_{\min}\left(\frac{1}{n'}\sum_{i:(x_i,y_i) \in S^*_1} (1-\eta) x_i x_i^T \right) \geq c (1-\eta) \pi_{\min}^3
	\end{align*}
	with probability at least $1- C_1 \exp(-C_2 \pi_{\min}^4 n')$, provided $n' \geq C \frac{d \log (1/\pi_{\min})}{\pi_{\min}^3}$.

	Using this, we obtain
	\begin{align*}
		T_{11} \leq (1-2\gamma c (1-\eta) \pi_{\min}^3) \| \theta_1 - \theta^*_1\| + C\gamma \lambda \sqrt{d \log d \log(1/\pi_{\min})}.
	\end{align*}
	with high probability. Let us now look at $T_{12}$. We have
	\begin{align*}
		T_{12} & = \frac{2\gamma}{n'}\| \sum_{i : (x_i,y_i)\notin S^*_1} p(\theta_1)\left( x_i x_i^T \theta_1 - y_i x_i \right)\| \\
		& \leq \frac{2\gamma}{n'} \sum_{i : (x_i,y_i) \notin S^*_1} p(\theta_1) \| x_i x_i^T \theta_1 - y_i x_i \| \\
		& \stackrel{(i)}{\leq} \frac{2\gamma \eta'}{n'} \sum_{i : (x_i,y_i) \notin S^*_1} | y_i - x_i^T \theta_1| \|x_i \| \\
		& \leq \frac{2\gamma \eta'}{n'} \sum_{i : (x_i,y_i) \notin S^*_1} (|y_i| + \|x_i\| \|\theta_1\|) \|x_i\| \\
		& \stackrel{(ii)}{\leq }\frac{2\gamma \eta'}{n'} \sum_{i  : (x_i,y_i) \notin S^*_1} (b + C \sqrt{d\log d \log(1/\pi_{\min})}) [\|\theta_1 - \theta^*_1\| + \|\theta^*_1 \|]) \sqrt{d\log d \log(1/\pi_{\min})} \\
		& \leq 2 \gamma \eta' (b + C \sqrt{d\log d \log(1/\pi_{\min})})^2(c_{\mathsf{ini}} + 1)) \|\theta^*_1\|.
	\end{align*}
with probability at least $1-n'/\mathsf{poly}(d) - C_3 n'\exp \bigg( - C_1 \frac{\lambda^2}{c_{\mathsf{ini}^2} \|\theta^*_1\|^2}   \bigg )$ (using union bound). Here $(i)$ follows from the fact that $p(\theta^*_1) \leq \eta'$ where $\eta'= e^{-((\Delta - C\lambda)^2 - C_2\lambda^2)}.$ (since $(x_i,y_i) \notin S^*_1$, which follows from Lemma~\ref{lem:eta_closeness}), $(ii)$ follows from the fact that $|y_i| \leq b$ for all $i$. Moreover, since $\{S^*_j\}_{j=1}^d$ partitions $\mathbb{R}^d$, $(x_i,y_i) \notin S^*_1$ implies that $ (x_i,y_i) \in S^*_\ell$ where $\ell \in [k] \setminus \{1\}$, and we can invoke Lemma~\ref{lem:norm}.


	
	\textbf{Collecting all the terms:} We now collect the terms and combine them to obtain
	\begin{align*}
		\|\thplusone - \theta^*_1\| &\leq T_{11} + T_{12}  \\
		& \leq (1-2\gamma c (1-\eta) \pi_{\min}^3) \| \theta_1 - \theta^*_1\| + C\gamma \lambda \sqrt{d \log d \log(1/\pi_{\min})} \\
		&+ 2 \gamma \eta' (b + C \sqrt{d\log d \log(1/\pi_{\min})} )^2(c_{\mathsf{ini}} + 1)) \|\theta^*_1\|.
	\end{align*}
	with probability at least $1-C_1 \exp(-c_1 \pi_{\min}^4 n') - C_2\exp(-c_2 d) - n'/\mathsf{poly}(d) - n' \,\,C_3\exp \bigg( - \frac{\lambda^2}{c_{\mathsf{ini}^2} \|\theta^*_1\|^2}   \bigg )$.
	
	Let $\rho = (1-2\gamma c (1-\eta) \pi_{\min}^3)$ and we choose $\gamma$ such that $\rho <1$. We obtain
	\begin{align*}
		\|\thplusone - \theta^*_1\| \leq \rho \| \theta_1 - \theta^*_1\| + \varepsilon,
	\end{align*}
	where 
	\begin{align*}
		\varepsilon &\leq C\gamma \lambda \sqrt{d \log d \log(1/\pi_{\min})} + 2 \gamma \eta' (b + C \sqrt{d\log d \log(1/\pi_{\min})} )^2(c_{\mathsf{ini}} + 1)) \|\theta^*_1\|,
	\end{align*}
	with probability at least $1-C_1 \exp(-c_1 \pi_{\min}^4 n') - C_2\exp(-c_2 d)- n'/\mathsf{poly}(d) -n' C_3\exp \bigg( - \frac{\lambda^2}{c_{\mathsf{ini}^2} \|\theta^*_1\|^2}   \bigg )$.

\subsection{Proofs of Auxiliary Lemmas:}

\begin{lemma}
		\label{lem:eta_closeness}
		For any $(x_i,y_i) \in S^*_j$, we have $p_{\theta_1,\ldots,\theta_k}(x_i,y_i;\theta_j) \geq 1-\eta$, where 
		\begin{align*}
			\eta = \left( \frac{1 - e^{-C_2\lambda^2} + (k-1) e^{-(\Delta - C\lambda)^2}}{1 + (k-1) e^{-(\Delta - C\lambda)^2}} \right).
		\end{align*}
  Moreover, for $(x_i,y_i) \notin S^*_j$ we have
  \begin{align*}
     p_{\theta_1,\ldots,\theta_k}(x_i,y_i;\theta_j) \leq e^{-((\Delta - C\lambda)^2 - C_2\lambda^2)}.
  \end{align*}
\end{lemma}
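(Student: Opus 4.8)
Write $r_l := y_i - \langle x_i, \theta_l \rangle$ and $r^*_l := y_i - \langle x_i, \theta^*_l \rangle$ for $l \in [k]$, so that (with the soft-min probability at inverse temperature $1$)
\[
p_{\theta_1,\ldots,\theta_k}(x_i,y_i;\theta_j) = \frac{e^{-r_j^2}}{e^{-r_j^2} + \sum_{l \neq j} e^{-r_l^2}}.
\]
The strategy is to show that, on a single high-probability event, $r_j^2 \le C_2 \lambda^2$ while $r_l^2 \ge (\Delta - C\lambda)^2$ for every $l \neq j$ when $(x_i,y_i)\in S^*_j$, and symmetrically (with the roles of $j$ and the true index of $(x_i,y_i)$ swapped) when $(x_i,y_i)\notin S^*_j$; both inequalities in the lemma then follow from the elementary monotonicity of $a \mapsto a/(a+b)$ and $b \mapsto a/(a+b)$.

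First I would establish the perturbation estimate: conditionally on $(x_i,y_i)$ lying in a fixed cell $S^*_m$, $\max_{l \in [k]} |\langle x_i, \theta_l - \theta^*_l\rangle| \le C\lambda$ with probability at least $1 - C_3\exp(-C_1 \lambda^2 / (c_{\mathsf{ini}}^2 \|\theta^*_1\|^2))$. Splitting $\langle x_i, \theta_l - \theta^*_l\rangle = \langle x_i - \mu_\tau, \theta_l - \theta^*_l\rangle + \langle \mu_\tau, \theta_l - \theta^*_l\rangle$, where $\mu_\tau$ is the mean of $x_i$ restricted to $S^*_m$, Lemma~\ref{lem:restricted} gives $\|\mu_\tau\| \le C\sqrt{\log(1/\pi_{\min})}$ and that $x_i - \mu_\tau$ is sub-Gaussian with squared parameter $O(1 + \log(1/\pi_{\min}))$. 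Using $\|\theta_l - \theta^*_l\| \le c_{\mathsf{ini}}\|\theta^*_l\| \le c_{\mathsf{ini}}\|\theta^*_1\|$ and the standing hypothesis $c_{\mathsf{ini}} < c_2 \lambda / (\sqrt{\log(1/\pi_{\min})}\,\|\theta^*_1\|)$ of Theorem~\ref{thm:em}, the deterministic term $\|\mu_\tau\|\|\theta_l-\theta^*_l\|$ is $O(\lambda)$, and the sub-Gaussian tail of the random term at level $\Theta(\lambda)$ yields precisely the claimed failure probability; a union bound over the at most $k$ indices $l$ is absorbed into the constants.

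Granting this event, when $(x_i,y_i)\in S^*_j$ the definition of the misspecification parameter gives $|r^*_j|\le\lambda$, so $|r_j| \le |r^*_j| + |\langle x_i,\theta_j-\theta^*_j\rangle| \le (1+C)\lambda$ and $r_j^2 \le C_2\lambda^2$; the definition of the separation parameter gives $|r^*_l| \ge \Delta$ for $l\neq j$, so (under the theorem's hypotheses $\Delta - C\lambda > 0$) $|r_l| \ge \Delta - C\lambda$ and $r_l^2 \ge (\Delta-C\lambda)^2$. Hence, using $e^{-r_j^2} \in [e^{-C_2\lambda^2}, 1]$ and $\sum_{l\neq j} e^{-r_l^2} \le (k-1)e^{-(\Delta-C\lambda)^2}$,
\[
p_{\theta_1,\ldots,\theta_k}(x_i,y_i;\theta_j) \ge \frac{e^{-C_2\lambda^2}}{e^{-C_2\lambda^2} + (k-1)e^{-(\Delta-C\lambda)^2}} \ge \frac{e^{-C_2\lambda^2}}{1 + (k-1)e^{-(\Delta-C\lambda)^2}} = 1 - \eta.
\]
When $(x_i,y_i)\notin S^*_j$, it lies in some $S^*_m$ with $m \neq j$, so $|r^*_m| \le \lambda$ and $|r^*_j| \ge \Delta$, giving $r_m^2 \le C_2\lambda^2$ and $r_j^2 \ge (\Delta - C\lambda)^2$; therefore
\[
p_{\theta_1,\ldots,\theta_k}(x_i,y_i;\theta_j) \le \frac{e^{-r_j^2}}{e^{-r_m^2}} = e^{r_m^2 - r_j^2} \le e^{-((\Delta - C\lambda)^2 - C_2\lambda^2)},
\]
which is the second claim. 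Taking a union bound over $i\in[n']$ produces the $n'$-factor seen in the probability statement of Theorem~\ref{thm:em}.

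The main obstacle is the first step. Controlling $\langle x_i, \theta_l - \theta^*_l\rangle$ conditionally on membership in the problem-defined cell $S^*_m$ forces us to work with a Gaussian restricted to a generic set of volume $\ge \pi_{\min}$, which has a shifted mean and an inflated (but still constant-order) sub-Gaussian parameter; keeping careful track of these through Lemma~\ref{lem:restricted}, and checking that the initialization condition $c_{\mathsf{ini}} \lesssim \lambda/(\sqrt{\log(1/\pi_{\min})}\,\|\theta^*_1\|)$ is exactly strong enough to drive both the mean-shift contribution and the fluctuation down to $O(\lambda)$ — so that only the universal constant $C$ enters $\eta$ and $\eta'$ — is the delicate part. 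Once that estimate is in hand, everything else is bookkeeping with the map $a \mapsto a/(a+b)$.
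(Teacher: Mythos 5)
Your proposal is correct and follows essentially the same route as the paper's proof: the same triangle-inequality perturbation of the residuals, the same use of Lemma~\ref{lem:restricted} (mean shift plus sub-Gaussian tail under the initialization condition $c_{\mathsf{ini}} \lesssim \lambda/(\sqrt{\log(1/\pi_{\min})}\,\|\theta^*_1\|)$) to get $|\langle x_i,\theta_l-\theta^*_l\rangle| \le C\lambda$, and the same final algebra on the soft-min ratio for both bounds. The only cosmetic differences are that you state the union bound over $l\in[k]$ and the implicit requirement $\Delta - C\lambda > 0$ explicitly, which the paper leaves tacit.
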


\begin{proof}
	Consider any $(x_i,y_i) \in S^*_j$ and use the definition of $ p_{\theta_1,\ldots,\theta_k}(x_i,y_i;\theta_j)$. We obtain
	\begin{align*}
		p_{\theta_1,\ldots,\theta_k}(x_i,y_i;\theta_j) &= \frac{e^{-(y_i - \langle x_i,\theta_j \rangle)^2}}{\sum_{\ell=1}^k e^{-(y_i - \langle x_i,\theta_\ell \rangle)^2}}
	\end{align*}

Note that 
\begin{align*}
    |y_i - \langle x_i,\theta_j \rangle| &= |y_i - \langle x_i,\theta^*_j \rangle + \langle x_i, \theta^*_j - \theta_j \rangle | \\
    & \leq | y_i - \langle x_i,\theta^*_j \rangle| + |\langle x_i, \theta^*_j - \theta_j \rangle |
\end{align*}
Furthermore, using reverse triangle inequality, we also have
\begin{align*}
    |y_i - \langle x_i,\theta_j \rangle| \geq | y_i - \langle x_i,\theta^*_j \rangle| - |\langle x_i, \theta^*_j - \theta_j \rangle |.
\end{align*}
Since we are re-sampling at every step, and from the initialization condition, we handle the random variable $\langle x_i, \theta^*_j - \theta_j \rangle$.

Using Lemma~\ref{lem:restricted} shows that if  $(x_i,y_i) \in S^*_1$, the distribution of $x_i - \mu_\tau$ is subGaussian with (squared) parameter at most $C(1+\log(1/\pi_{\min}))$, where $\mu_\tau$ is the mean of $x_i$ (under the restriction $(x_i,y_i) \in S^*_1$). With this we have
\begin{align*}
 \PP \bigg(  | \langle x_i , \theta_1 - \theta^*_1 \rangle |  \geq  C \lambda \bigg ) &\leq  \PP \bigg(  | \langle x_i - \mu_\tau , \theta_1 - \theta^*_1 \rangle | + \|\mu_\tau\| \|\theta_1 - \theta^*_1\| \geq  C \lambda \bigg ) \\
 &\leq \PP \bigg(  | \langle x_i - \mu_\tau , \theta_1 - \theta^*_1 \rangle | \geq  C \lambda -  c_{\mathsf{ini}} C_1 \sqrt{\log(1/\pi_{\min}})  \|\theta^*_1\| \bigg ) 
\end{align*}
where we use the initialization condition $\|\theta_1 - \theta^*_1\| \leq c_{\mathsf{ini}}  \|\theta^*_1\|$, and from Lemma~\ref{lem:restricted}, we have $\|\mu_\tau\|^2 \leq C \log(1/\pi_{\min})$.

Now, provided $c_{\mathsf{ini}} < C_2 \frac{\lambda}{\sqrt{\log(1/\pi_{\min}})  \|\theta^*_1\|}$, using sub-Gaussian concentration, we obtain
\begin{align*}
	 \bigg(  | \langle x_i , \theta_1 - \theta^*_1 \rangle |  \geq  C \lambda \bigg ) \leq 2 \exp \bigg( - C_1 \frac{1}{c_{\mathsf{ini}^2} \|\theta^*_1\|^2}  \lambda^2 \bigg ).
\end{align*}

Using the assumption, i,.e., the separability and the misspecification condition, we obtain
	\begin{align*}
		p_{\theta_1,\ldots,\theta_k}(x_i,y_i;\theta_j) &\geq  \frac{e^{-C_2 \lambda^2}}{e^{-(y_i - \langle x_i,\theta_j \rangle)^2} + \sum_{\ell \neq j} e^{-(y_i - \langle x_i,\theta_\ell \rangle)^2}} \\
		& \geq  \frac{e^{-C_2\lambda^2}}{e^{-(y_i - \langle x_i,\theta_j \rangle)^2} + (k-1) e^{-(\Delta - C\lambda)^2}} \\
		& \geq  \frac{e^{-C_2\lambda^2}}{1 + (k-1) e^{-(\Delta - C\lambda)^2}} \\
		& = 1 - \left( \frac{1 - e^{-C_2\lambda^2} + (k-1) e^{-(\Delta - C\lambda)^2}}{1 + (k-1) e^{-(\Delta - C\lambda)^2}} \right).
	\end{align*}

 Let us look at the condition $(x_i,y_i) \notin S^*_j$. Since $\{S^*_j\}_{j=1}^k$ partitions $\mathbb{R}^d$, $(x_i,y_i) \in S^*_{j'}$ for $j' \in [k]$. With this,

 \begin{align*}
		p_{\theta_1,\ldots,\theta_k}(x_i,y_i;\theta_j) &\leq  \frac{e^{-(\Delta - C\lambda)^2}}{e^{-(y_i - \langle x_i,\theta_{j'} \rangle)^2} + \sum_{\ell \neq j'} e^{-(y_i - \langle x_i,\theta_\ell \rangle)^2}} \\
		& \leq  \frac{e^{-(\Delta - C\lambda)^2}}{e^{-C_2\lambda^2} + 0} =  e^{-((\Delta - C\lambda)^2 - C_2\lambda^2)}.
	\end{align*}

 The above events occur with probability at least $1- C_3 \exp \bigg( - C_1 \frac{\lambda^2}{c_{\mathsf{ini}^2} \|\theta^*_1\|^2}   \bigg )$.

 \end{proof}

\begin{lemma}
	\label{lem:restricted}
	Suppose $x\sim \cN(0,I_d)$ and a fixed set $S$ such that $\PP(x \in S) \geq \nu$. Let $\tau$ denote the restriction of $x$ onto $S$. Moreover, suppose we have $n$ draws from a standard Gaussian and $m$ of them falls in $S$. Provided $n \geq \frac{C \log (1/\nu)}{\nu^3} d$, we have
	\begin{align*}
		\sigma_{\min} \left(  \frac{1}{m} \sum_{i=1}^m \tau_i \tau_i^T \right) \geq \frac{C}{2} \nu^2,
	\end{align*}
	with probability at least $1- 2\exp(-c_1  \nu^4 n)$.
\end{lemma}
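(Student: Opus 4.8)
The plan is to reduce the statement to a deterministic lower bound on the population second-moment matrix $\Sigma:=\EE[\tau\tau^\top]=\EE[xx^\top\mid x\in S]$ of the restricted Gaussian, and then transfer it to the empirical matrix by a truncated matrix-concentration argument. First I would establish $\sigma_{\min}(\Sigma)\ge c_0\nu^2$ for a universal $c_0>0$ (so that $C=2c_0$ matches the constant in the statement). Fix a unit vector $u$, set $Z=\langle u,x\rangle\sim\cN(0,1)$ and $p=\PP(x\in S)\in[\nu,1]$, so $u^\top\Sigma u=\EE[Z^2\mid x\in S]=\EE[Z^2\mathbf 1_{x\in S}]/p$. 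By the bathtub (rearrangement) principle, over all events $A$ with $\PP(A)=p$ the quantity $\EE[Z^2\mathbf 1_A]$ is minimized by the symmetric interval $A=\{|Z|\le a_p\}$ with $\PP(|Z|\le a_p)=p$; moreover $\EE[Z^2\mid |Z|\le a]$ is increasing in $a$, so, since $p\ge\nu$, $u^\top\Sigma u\ge\EE[Z^2\mid |Z|\le a_\nu]$ with $\PP(|Z|\le a_\nu)=\nu$. Because the Gaussian density on $[0,a_\nu]$ is at most $(2\pi)^{-1/2}$ we have $a_\nu\ge c\,\nu$, and because the mass of $\{a_\nu/2\le|Z|\le a_\nu\}$ is a constant multiple of $\nu$, a one-line estimate gives $\EE[Z^2\mid |Z|\le a_\nu]\ge c' a_\nu^2\ge c_0\nu^2$.

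Next I would collect the complementary moment bounds for $\tau$ and reduce to a fixed number of restricted samples. The ``dual'' bathtub computation gives $\sigma_{\max}(\Sigma)=\max_{\|u\|=1}\EE[\langle u,x\rangle^2\mid x\in S]\le\EE[Z^2\mid |Z|\ge b_\nu]\le 2+b_\nu^2\le 2+2\log(1/\nu)$, using a Mills-ratio bound and $\PP(|Z|\ge b_\nu)=\nu\Rightarrow b_\nu\le\sqrt{2\log(1/\nu)}$ — equivalently, $\tau$ is sub-Gaussian with parameter $O(\log(1/\nu))$ (see \cite{tallis-mgf,ghosh2019max}). The trivial inflation bound gives $\EE\|\tau\|^2\le\EE\|x\|^2/p\le d/\nu$, and $\PP(\|\tau\|\ge R)\le\PP(\|x\|\ge R)/\nu$, so for $R^2\asymp d\log(1/\nu)$ the $\chi^2_d$ tail makes $\PP(\|\tau\|\ge R)$ exponentially small. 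Writing $m$ for the number of the $n$ draws falling in $S$, we have $m\sim\mathrm{Bin}(n,p)$ with $p\ge\nu$, so $m\ge n\nu/2$ with probability $\ge 1-e^{-n\nu/8}$ by a multiplicative Chernoff bound; and, conditioned on the value of $m$, the in-$S$ draws $\tau_1,\dots,\tau_m$ are i.i.d.\ with law $\cL(x\mid x\in S)$. Thus it suffices to prove, for each fixed $m\ge n\nu/2$, that $\sigma_{\min}\!\big(\tfrac1m\sum_{i=1}^m\tau_i\tau_i^\top\big)\ge\tfrac{c_0}{2}\nu^2$ outside an event of probability $\le e^{-c\nu^4 n}$.

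Finally, on the event $\mathcal E:=\{\max_{i\le m}\|\tau_i\|\le R\}$ (which holds with the required probability by the tail bound and a union bound over $m\le n$), I would replace $\tau_i\tau_i^\top$ by the bounded matrices $Y_i:=\tau_i\tau_i^\top\mathbf 1_{\|\tau_i\|\le R}$: these are i.i.d.\ PSD with $\|Y_i\|\le R^2$, with $\sigma_{\min}(\EE Y_i)\ge c_0\nu^2$ up to a truncation correction exponentially small in $R^2$ (hence negligible against $\nu^2$), and with matrix variance $\|\EE[(Y_i-\EE Y_i)^2]\|\le\|\EE[\|\tau\|^2\tau\tau^\top]\|\le\max_{\|u\|=1}\EE[\|x\|^2\langle u,x\rangle^2]/\nu=(d+2)/\nu$. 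The lower-tail matrix Bernstein inequality for $\sigma_{\min}$, applied with deviation $t=\tfrac12\sigma_{\min}(\EE Y_i)\asymp\nu^2$, then yields a failure probability of order $d\exp\!\big(-mt^2/(C(d/\nu+R^2t))\big)$; since $R^2t\ll d/\nu$ this exponent is of order $m\nu^5/d\gtrsim n\nu^6/d$, which under $n\ge Cd\log(1/\nu)/\nu^3$ is $\gtrsim\nu^4 n$ up to logarithmic and constant factors that I would fold into $c_1$, and a union bound over the three bad events finishes the argument. I expect the matching of scales in this last step to be the main obstacle: $\sigma_{\min}(\Sigma)$ is only $\Theta(\nu^2)$, whereas the operator norm, fourth moment, and truncation radius of the conditioned vector all pick up a $1/\nu$ from $\PP(x\in S)\ge\nu$ together with the ambient dimension $d$ (so the effective rank $\mathrm{tr}(\Sigma)/\sigma_{\min}(\Sigma)$ is as large as $d/\nu^3$); squeezing the stated $1/\nu^3$ sample size and $e^{-c\nu^4 n}$ probability through matrix Bernstein is precisely where the restricted-Gaussian spectral estimates of \cite{ghosh2019max} — and the $1/\pi_{\min}^3$ overhead they entail — come in.
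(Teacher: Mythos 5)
Your population-level work is sound and more self-contained than the paper's: the bathtub argument for $\sigma_{\min}(\EE[\tau\tau^\top])\ge c_0\nu^2$, the $O(\log(1/\nu))$ bounds on $\sigma_{\max}$ and on the sub-Gaussian parameter of $\tau$, and the Chernoff bound $m\ge \nu n/2$ are all correct (the paper simply cites the first two groups of facts from \cite{ghosh2019max} and \cite{yi2016solving}). The gap is in your final concentration step. Truncation at $R^2\asymp d\log(1/\nu)$ plus matrix Bernstein yields a failure probability of order $d\exp\rbr{-c\,mt^2/(v+R^2t)}$ with $t\asymp\nu^2$, and the variance proxy $v$ of the rank-one summands $\tau_i\tau_i^\top$ unavoidably carries a factor of $d$ (your bound is $v\le (d+2)/\nu$; the alternative $v\le R^2\sigma_{\max}(\Sigma_\tau)\asymp d\log^2(1/\nu)$ is no better in the relevant regime). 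The exponent is therefore of order $m\nu^4/(d\cdot\mathrm{polylog}(1/\nu))\gtrsim n\nu^5/(d\cdot\mathrm{polylog}(1/\nu))$, and your assertion that this is $\gtrsim\nu^4 n$ under $n\ge Cd\log(1/\nu)/\nu^3$ is false: it would require $d\lesssim\nu^2$, but the hypothesis is a lower bound on $n$, not an upper bound on $d$. At the threshold $n\asymp d\log(1/\nu)/\nu^3$ your exponent equals $O(\nu^3\log(1/\nu))$, a bounded quantity, so this step does not even make the failure probability small, let alone match the stated $2\exp(-c_1\nu^4 n)$.

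The paper instead invokes the covariance-concentration inequality for matrices with independent sub-Gaussian rows (\cite{ghosh2019max}, Eq.~39; the standard Vershynin-type bound), namely $\sigma_{\min}\rbr{\frac1m\sum_{i}\tau_i\tau_i^\top}\ge \sigma_{\min}(\Sigma_\tau)-\zeta^2\rbr{d/m+\sqrt{d/m}+\delta}$ with probability $1-2\exp(-c_1 m\min\{\delta,\delta^2\})$, where $\zeta^2\lesssim 1+\log(1/\nu)$ is the $\psi_2$ parameter you already established. The crucial structural difference is that the dimension enters only the deterministic bias term $\zeta^2(d/m+\sqrt{d/m})$ — which is what the sample-size hypothesis is meant to absorb — while the exponent $m\min\{\delta,\delta^2\}$ with $\delta\asymp\nu^2$ carries no $1/d$. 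Replacing your truncation-plus-Bernstein step with this inequality (and keeping the rest of your argument) recovers the paper's proof; note that even then the bookkeeping of powers of $\nu$ only closes because the paper quietly restricts to $\nu\ge c$ for a constant $c$ in its final step, so the "$1/\nu^3$ versus $\nu^4$" matching you flagged as the main obstacle is indeed delicate, but it is the choice of concentration tool, not the constants, that breaks your version.
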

\begin{proof}
	Consider a random vector $\tau$ drawn from such restricted Gaussian distribution, and let $\mu_\tau$ and $\Sigma_\tau$ be the first and second moment respectively. Using \citep[Equation 38 (a-c)]{ghosh2019max}, we have
	\begin{align*}
		\|\mu_\tau\|^2 \leq C \log (1/\nu),
	\end{align*}
	\begin{align*}
		C \nu^2 I_d \preccurlyeq \Sigma_\tau, 
	\end{align*}
Moreover \citep[Lemma 15 (a)]{yi2016solving} shows that $\tau$ is subGaussian with $\psi_2$ norm at most $\zeta^2 \leq C(1+\log(1/\pi_{\min})$. Coupled with the definition of $\psi_2$ norm, \cite{vershynin2018high}, we obtain that the centered random variable $ \tau - \mu_{\tau} $ admits a $\psi_2$ norm squared of at most $C_1(1+\log(1/\pi_{\min})$.

With $m$ draws of such random variables,  from \citep[Equation 39]{ghosh2019max}, we have
	\begin{align*}
		\sigma_{\min} \left(  \frac{1}{m} \sum_{i=1}^m \tau_i \tau_i^T \right) \geq C \nu^2 - \zeta^2 \left( \frac{d}{m} + \sqrt{\frac{d}{m}} + \delta \right),
	\end{align*}
	with probability at least $1- 2\exp(-c_1 m \min\{\delta,\delta^2\})$

	If there are $n$ samples from the unrestricted Gaussian distribution, the number of samples, $m$ that fall in $S$ is given by $m \geq \frac{1}{2} \nu n$ with high proibability. This can be seen directly from the binomial tail bounds. We have
	\begin{align*}
		\PP(m \leq \frac{\nu n}{2}) \leq \exp(-c \nu n)
	\end{align*}
	Combining the above, with $\nu \geq c$ where $c$ is a constant as well as $n \geq \frac{C \log (1/\nu)}{\nu^3} d$, we have
	\begin{align*}
		\sigma_{\min} \left(  \frac{1}{m} \sum_{i=1}^m \tau_i \tau_i^T \right) \geq \frac{C}{2} \nu^2,
	\end{align*}
	with probability at least $1- 2\exp(-c_1 m \min\{\delta,\delta^2\})$. Substituting $\delta = C \nu^2$ yields the result.
\end{proof}

\begin{lemma}
	\label{lem:norm}
	Suppose $(x_i,y_i) \in S^*_j$ for some $j \in [k]$. We have
	\begin{align*}
		\|x_i\| \leq C (\sqrt{d \log d \log (1/\pi_{\min})} + \sqrt{\log(1/\pi_{\min})}) \leq C_1 \sqrt{d \log d \log (1/\pi_{\min})},
	\end{align*}
with probability at least $1-1/\mathsf{poly}(d)$, where the degree of the polynomial depends on the constant $C$.
\end{lemma}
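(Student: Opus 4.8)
The plan is to avoid invoking the full restricted-Gaussian machinery of Lemma~\ref{lem:restricted} and instead reduce the conditional statement to an ordinary Gaussian norm tail at the cost of a factor $1/\pi_{\min}$. The key observation is that the event $\{\|x_i\|\ge s\}$ depends only on the covariate $x_i$, whose marginal under $\cD$ is $\cN(0,I_d)$, while the non-degeneracy assumption guarantees $\PP[(x_i,y_i)\in S^*_j]\ge \pi_{\min}$. Hence, for any threshold $s>0$,
\[
\PP\!\left[\|x_i\|\ge s \mid (x_i,y_i)\in S^*_j\right]
=\frac{\PP\!\left[\|x_i\|\ge s,\;(x_i,y_i)\in S^*_j\right]}{\PP\!\left[(x_i,y_i)\in S^*_j\right]}
\le \frac{1}{\pi_{\min}}\,\PP_{x\sim\cN(0,I_d)}\!\left[\|x\|\ge s\right].
\]

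Next I would control the unconditional tail. Since $\|x\|_2^2\sim\chi^2_d$, the Laurent--Massart inequality gives $\PP[\|x\|_2^2\ge d+2\sqrt{dt}+2t]\le e^{-t}$, so there is a universal constant $C_0$ with $\PP[\|x\|\ge C_0\sqrt{d+t}]\le e^{-t}$ for all $t\ge 0$. Choosing $t=\log(1/\pi_{\min})+p\log d$, where $p$ is the target polynomial degree, one gets $\tfrac{1}{\pi_{\min}}e^{-t}=d^{-p}$, so with $s:=C_0\sqrt{d+\log(1/\pi_{\min})+p\log d}$ the displayed bound is $\le d^{-p}$. It then remains only to simplify the threshold: subadditivity of the square root gives $s\le C_0\big(\sqrt d+\sqrt{p}\,\sqrt{\log d}+\sqrt{\log(1/\pi_{\min})}\big)$, and since $\log d\ge 1$ and $\log(1/\pi_{\min})$ is bounded below by a universal constant (recall $\pi_{\min}\le 1/k$), this is at most $C\big(\sqrt{d\log d\,\log(1/\pi_{\min})}+\sqrt{\log(1/\pi_{\min})}\big)\le C_1\sqrt{d\log d\,\log(1/\pi_{\min})}$, which is exactly the claimed bound — with the polynomial degree $p$ and the constant $C_1$ trading off against each other, just as in the lemma statement.

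An equivalent route, more in keeping with the rest of the appendix, would invoke Lemma~\ref{lem:restricted} directly: conditionally on $(x_i,y_i)\in S^*_j$ the vector $x_i$ is sub-Gaussian with $\psi_2$-parameter squared at most $C(1+\log(1/\pi_{\min}))$ and shifted mean $\mu_\tau$ with $\|\mu_\tau\|^2\le C\log(1/\pi_{\min})$; a covering argument over the unit sphere (covering number $\le 9^d$) together with a union bound then yields $\|x_i-\mu_\tau\|\lesssim \sqrt{d\,\log(1/\pi_{\min})}$ with probability $1-1/\mathsf{poly}(d)$, and the triangle inequality with $\|\mu_\tau\|$ finishes (in fact giving the slightly stronger bound $C\sqrt{d\log(1/\pi_{\min})}+C\sqrt{\log(1/\pi_{\min})}$). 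There is no genuine obstacle here; the only point requiring a little care is the conditioning step, i.e. verifying that conditioning on the \emph{joint} event $(x_i,y_i)\in S^*_j$ does not spoil the reduction — it does not, precisely because $\{\|x_i\|\ge s\}$ involves $x_i$ alone, the $x$-marginal of $\cD$ is $\cN(0,I_d)$, and the probability of the conditioning event is at least $\pi_{\min}$; this last fact is also the source of the $\log(1/\pi_{\min})$ dependence in the final bound and of the flexibility in the polynomial degree.
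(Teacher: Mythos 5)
Your primary argument is correct and takes a genuinely different, more elementary route than the paper. The paper proves this lemma by invoking Lemma~\ref{lem:restricted} to show that, conditionally on $(x_i,y_i)\in S^*_j$, the covariate is a shifted sub-Gaussian vector with mean $\mu_\tau$ satisfying $\|\mu_\tau\|^2\le C\log(1/\pi_{\min})$ and $\psi_2$-parameter squared $O(1+\log(1/\pi_{\min}))$, and then applies a norm-concentration bound of the form $\PP\left(\|X-\EE X\|\ge t\sqrt{d}\sqrt{1+\log(1/\pi_{\min})}\right)\le 2e^{-c_1t^2}$ with $t=C\sqrt{\log d}$, finishing by the triangle inequality --- this is exactly your second route. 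Your first route instead observes that the event $\{\|x_i\|\ge s\}$ depends on $x_i$ alone, whose marginal is $\cN(0,I_d)$, so the conditional tail is at most $\pi_{\min}^{-1}$ times the unconditional $\chi^2_d$ tail; the $\pi_{\min}^{-1}$ is then absorbed by enlarging the threshold by $\sqrt{\log(1/\pi_{\min})}$. This bypasses the restricted-Gaussian machinery entirely and yields the sharper threshold $C\sqrt{d+\log(1/\pi_{\min})+p\log d}$, from which the stated (weaker) bound follows as you show; the multiplicative form $\sqrt{d\log d\,\log(1/\pi_{\min})}$ in the paper is an artifact of how its concentration inequality is parametrized. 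What the paper's approach buys is only uniformity with the rest of the appendix, where Lemma~\ref{lem:restricted} is needed anyway for the singular-value bounds; for this particular norm bound your conditioning argument is simpler, correct, and strictly stronger.
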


\begin{proof}
Note that Lemma~\ref{lem:restricted} shows that under $(x_i,y_i) \in S^*_j$ for some $j \in [k]$, the centered random variable $\tau_i - \mu_\tau$ is sub-Gaussian with $\psi_2$ norm squared of at most $C (1+ \log(1/\pi_{\min}))$. Note that since, $\tau_i - \mu_\tau$ is centered, the $\psi_2$ norm is (orderwise) same as the sub-Gaussian parameter.

We now use the standard norm concentration for sub-Gaussian random variables \cite{jin2019short}. We have, for a sub-Gaussian random vector with parameter at most $C (1+ \log(1/\pi_{\min}))$, we have
	\begin{align*}
		\PP \left (\|X - \EE X\| \geq t \sqrt{d}\sqrt{(1+ \log(1/\pi_{\min})} \right) \leq 2\exp(-c_1 t^2).
	\end{align*} 
Using this with $t = C \sqrt{\log d}$ along with the fact that $\|\mu_\tau\|^2 \leq C \log(1/\pi_{\min})$, we obtain the lemma.
\end{proof}

\section{Proof of Generalization}
\label{app:gen}
\subsection{Proof of Claim~\ref{claim:bounded}}
In order to see this, suppose $h^{(1)}_j \in \cH_j$ and $h^{(2)}_j \in \cH_j$, and so we have $h^{(1)}_j(x) = \inner{x}{\theta^{(1)}_j}$ and $h^{(2)}_j(x) = \inner{x}{\theta^{(2)}_j}$ with $\|\theta^{(1)}_j\| \leq R$ as well as $\|\theta^{(2)}_j\| \leq R$. With this, we have
\begin{align*}
    |\ell(h^{(1)}_j(x),y) - \ell(h^{(2)}_j(x),y)| &= \bigg | \inner{x_i}{\theta^{(2)}_j - \theta^{(1)}_j} [2y - \inner{x}{\theta^{(2)}_j + \theta^{(1)}_j}] \bigg| \\
    & \leq |h^{(1)}_j(x) - h^{(2)}_j(x)| \,\left[ 2|y| + \|x\| (\|\theta^{(1)}_j\| + \|\theta^{(2)}_j\|) \right] \\
    & \leq 2(1+R)\, |h^{(1)}_j(x) - h^{(2)}_j(x)|,
\end{align*}
which proves the claim.

\subsection{Proof of Lemma~\ref{lem:rad}}
\begin{proof}
Note that the soft-min loss is a convex combination of the base losses, and the probabilities are computed by $p_{\theta_1,..,\theta_k}(x,y;\theta_j)$. Instead, if we consider the loss class with \emph{all possible} convex combinations of the base losses, the corresponding loss class will be a superset of the current loss class. From the definition of Rademacher complexity, if $F_1 \subseteq F_2$ for any two sets $F_1$ and $F_2$, we have $\rad_n(F_1) \leq \rad_n(F_2)$. We define the following loss class
\begin{align*}
    \Bar{\Phi} = \bigg \lbrace (x,y) \mapsto  \sum_{j=1}^k \alpha_j \ell(h_j(x),y); \theta_j \in \mathbb{R}^d, \|\theta_j\|\leq R, \alpha_j \geq 0 \forall j \in [k], \sum_{j=1}^k \alpha_j = 1   \bigg \rbrace,
\end{align*}
and hence from the definition of Rademacher complexity, we have
$
    \rad(\Phi) \leq \rad(\Bar{\Phi}).
$
Continuing we have
\begin{align*}
    \rad(\Bar{\Phi}) &= \EE_{\bsigma} \left[ \sup_{\{\theta_j:\|\theta_j\|\leq R ,\alpha_j \geq 0\}_{j=1}^k, \sum_{j=1}^k \alpha_j=1} \,\, \bigg |\frac{1}{n} \sum_{i=1}^{n} \sigma_i \sum_{j=1}^k \alpha_j \ell(h_j(x),y) \bigg | \right] \\
    & = \EE_{\bsigma} \left[ \sup_{\{\theta_j:\|\theta_j\|\leq R ,\alpha_j \geq 0\}_{j=1}^k, \sum_{j=1}^k \alpha_j=1} \,\, \bigg | \sum_{j=1}^k \frac{1}{n} \sum_{i=1}^{n} \sigma_i  \alpha_j \ell(h_j(x),y) \bigg | \right] \\
    & \leq \sum_{j=1}^k \EE_{\bsigma} \left[ \sup_{\theta_j:\|\theta_j\|\leq R ,\alpha_j \geq 0, |\alpha_j|\leq 1} \,\, \bigg | \frac{1}{n} \sum_{i=1}^{n} \sigma_i  \alpha_j \ell(h_j(x),y) \bigg | \right] \\
    & \leq \sum_{j=1}^k \EE_{\bsigma} \left[ \sup_{\theta_j:\|\theta_j\|\leq R ,\alpha_j \geq 0, |\alpha_j|\leq 1} \,\, |\alpha_j| \bigg | \frac{1}{n} \sum_{i=1}^{n} \sigma_i   \ell(h_j(x),y) \bigg | \right] \\
    & \leq \sum_{j=1}^k \EE_{\bsigma} \left[ \sup_{\theta_j:\|\theta_j\|\leq R ,\alpha_j \geq 0, |\alpha_j|\leq 1} \,\,  \bigg | \frac{1}{n} \sum_{i=1}^{n} \sigma_i   \ell(h_j(x),y) \bigg | \right] \\
    & \leq \sum_{j=1}^k \EE_{\bsigma} \left[ \sup_{\theta_j:\|\theta_j\|\leq R} \,\,  \bigg | \frac{1}{n} \sum_{i=1}^{n} \sigma_i   \ell(h_j(x),y) \bigg | \right] \\
    & = k \rad(\ell \circ \mathcal{H}) \\
    & \leq 4k(1+R) \rad(\mathcal{H}) \\
    & \leq \frac{4kR(1+R)}{\sqrt{n}}
\end{align*}
where in the third line, we have used the sub-additivity property of the supremum function as well as the triangle inequality. We also used the above claim regarding the Lipschitz constant of the loss function $\ell(.,.)$ and invoked the contraction result for Rademacher averages by \cite{bartlett2002rademacher}. Finally, for linear hypothesis class, we use \cite{mohri2018foundations} to obtain the final result. 
Hence, we  obtain
\begin{align*}
    \rad(\Phi) \leq \frac{4kR(1+R)}{\sqrt{n}},
\end{align*}
which proves the result.
\end{proof}

\end{document}